\colorlet{blue}{black}
\def\rc#1{\color{black}#1}
\def\rrc{\color{black}}
\def\bbc{\color{black}}
\def\proj{\text{P}}
\def\1{{\bf{1}}}
\def\0{{\bf{0}}}
\def\w{{\bf w}}
\def\v{{\bf v}}
\def\x{{\bf x}}
\def\y{{\bf y}}
\def\z{{\bf z}}
\def\g{{\bf g}}
\def\s{{\bf s}}
\def\r{{\bf r}}
\def\e{{\bf e}}
\def\W{{W}}
\newcommand{\R}{\mathbb{R}}
\newcommand{\E}{\mathbb{E}}
\let\P\undefined
\newcommand{\P}{\mathbb{P}}
\newcommand{\supp}[1]{\text{supp}(#1)}
\DeclareMathOperator*{\argmin}{arg\,min}
\newcounter{thm_counter}
\newcounter{def_counter}
\newcounter{pro_counter}
\newtheorem{theorem}[thm_counter]{Theorem}
\newtheorem{lemma}[thm_counter]{Lemma}
\newtheorem{definition}[def_counter]{Definition}
\newtheorem{proposition}[pro_counter]{Proposition}
\title{On The Projection Operator to A Three-view Cardinality Constrained Set}
\author{Haichuan Yang\textsuperscript{*}, Shupeng Gui\textsuperscript{*}, Chuyang Ke\textsuperscript{*}\\
Daniel  Stefankovic\textsuperscript{*}, Ryohei  Fujimaki\textsuperscript{\textdagger}, and Ji Liu\textsuperscript{*}\\
  \textsuperscript{*}Department of Computer Science, University of Rochester\\
  \textsuperscript{\textdagger}Knowledge Discovery Research Laboratories, NEC Corporation\\
  \texttt{h.yang@rochester.edu, shupenggui@gmail.com, cke@u.rochester.edu}\\
  \texttt{stefanko@cs.rochester.edu, rfujimaki@nec-labs.com, ji.liu.uwisc@gmail.com}}
\date{\today}
\begin{document}
\maketitle

%

%

\begin{abstract} 
The cardinality constraint is an intrinsic way to restrict the solution structure in many domains, for example, sparse learning, feature selection, and compressed sensing. To solve a cardinality constrained problem, the key challenge is to solve the projection onto the cardinality constraint set, which is NP-hard in general when there exist multiple overlapped cardinality constraints. In this paper, we consider the scenario where the overlapped cardinality constraints satisfy a Three-view Cardinality Structure (TVCS), which reflects the natural restriction in many applications, such as identification of gene regulatory networks and task-worker assignment problem. We cast the projection into a linear programming, and show that for TVCS, the vertex solution of this linear programming is the solution for the original projection problem. We further prove that such solution can be found with the complexity proportional to the number of variables and constraints. We finally use synthetic experiments and two interesting applications in bioinformatics and crowdsourcing to validate the proposed TVCS model and method.
\end{abstract}

\section{Introduction}

The cardinality constraint is an intrinsic way to restrict the solution structure in many real problems, for example, sparse learning \citep{olshausen1997sparse}, feature selection \citep{zhang2009consistency}, and compressed sensing \citep{candes2006stable}. The generic cardinality constrained optimization can be expressed as
\begin{subequations}
\begin{align}
\min_{\w\in \R^p} \quad &f(\w) \label{eq:obj}\\
\text{subject to } \quad &\|\w_{g}\|_0 \leq \s^g\quad \forall g\in \mathcal{G}\label{eq:gsparse}
\end{align}
\label{eq:main}
\end{subequations}

where $\w$ is the optimization variable, $g$ is an index subset of $[p]:=\{1,2,\cdots, p\}$, $\w_g$ is the sub-vector of $\w$ indexed by $g$. $\|\w_{g}\|_0$ denotes the cardinality of the sub-vector, i.e., the number of nonzeros in $\w_{g}$, $\mathcal{G}$ is the hyper set of all predefined groups, and $\s \in \R^{|\mathcal{G}|}$ is the upper bound vector - $\s^g\in \R$ refers to the upper bound of the sparsity over group $g$. Objective $f$ is the loss function which could be defined with different form according to the specific application. The problem \eqref{eq:main} refers to a nonconvex optimization (NP-hard) due to the cardinality constraint. Some efficient iterative methods such as IHT \citep{yuan2014gradient}, CoSaMP \citep{needell2009cosamp}, GradMP \citep{gradmp}, and their variants can guarantee to solve the original problem under some mild conditions. A key component in all of these methods is the projection operator
\begin{align}
\text{P}_{\Omega(\mathcal{G}, \s)}(\v) := \argmin_{\w \in \Omega(\mathcal{G}, \s)}\quad & \|\w - \v\|^2 \label{eq:proj_obj}
\end{align}
where $\Omega(\mathcal{G}, \s)$ denotes the feasible set to the constraint~\eqref{eq:gsparse}. While in some special case, for example, $\mathcal{G}= \{[p]\}$, the projection is trivial, it is quite challenging, especially when $\mathcal{G}$ includes multiple \emph{overlapped} index sets (even NP-hard in some cases). 

In this paper, we consider the scenario where the overlapped cardinality constraints~\eqref{eq:gsparse} satisfy a Three-view Cardinality Structure (TVCS):

\begin{definition}\label{def:overlap}
{\bf (Three-view Cardinality Structure (TVCS))}
For $\Omega(\mathcal{G}, \s)$, the hyper set $\mathcal{G}$ consisting of subsets of $[p]$ admits the TVCS structure if the following conditions are satisfied:
\begin{itemize}
\item There exists a partition $\mathcal{G}_0$, $\mathcal{G}_1$ and $\mathcal{G}_2$ such that $\mathcal{G} =   \mathcal{G}_0 \cup \mathcal{G}_1 \cup \mathcal{G}_2$;
\item $\mathcal{G}_0=\{[p]\}$;
\item All element sets in $\mathcal{G}_1$ have no overlap;
\item All element sets in $\mathcal{G}_2$ have no overlap.
\end{itemize}
\end{definition}

This definition basically requires that $\mathcal{G}$ can be partitioned into three hyper sets $\mathcal{G}_0$, $\mathcal{G}_1$, and $\mathcal{G}_2$, and overlaps can only happen between element sets in different hyper sets. 
$\mathcal{G}_0$ is usually used to restrict the overall sparsity. 
Figure~\ref{fig:tvcs} provides two examples of $\mathcal{G}$ for TVCS. 
\begin{figure}[tbp]
\centering
\includegraphics[width = 0.45\linewidth]{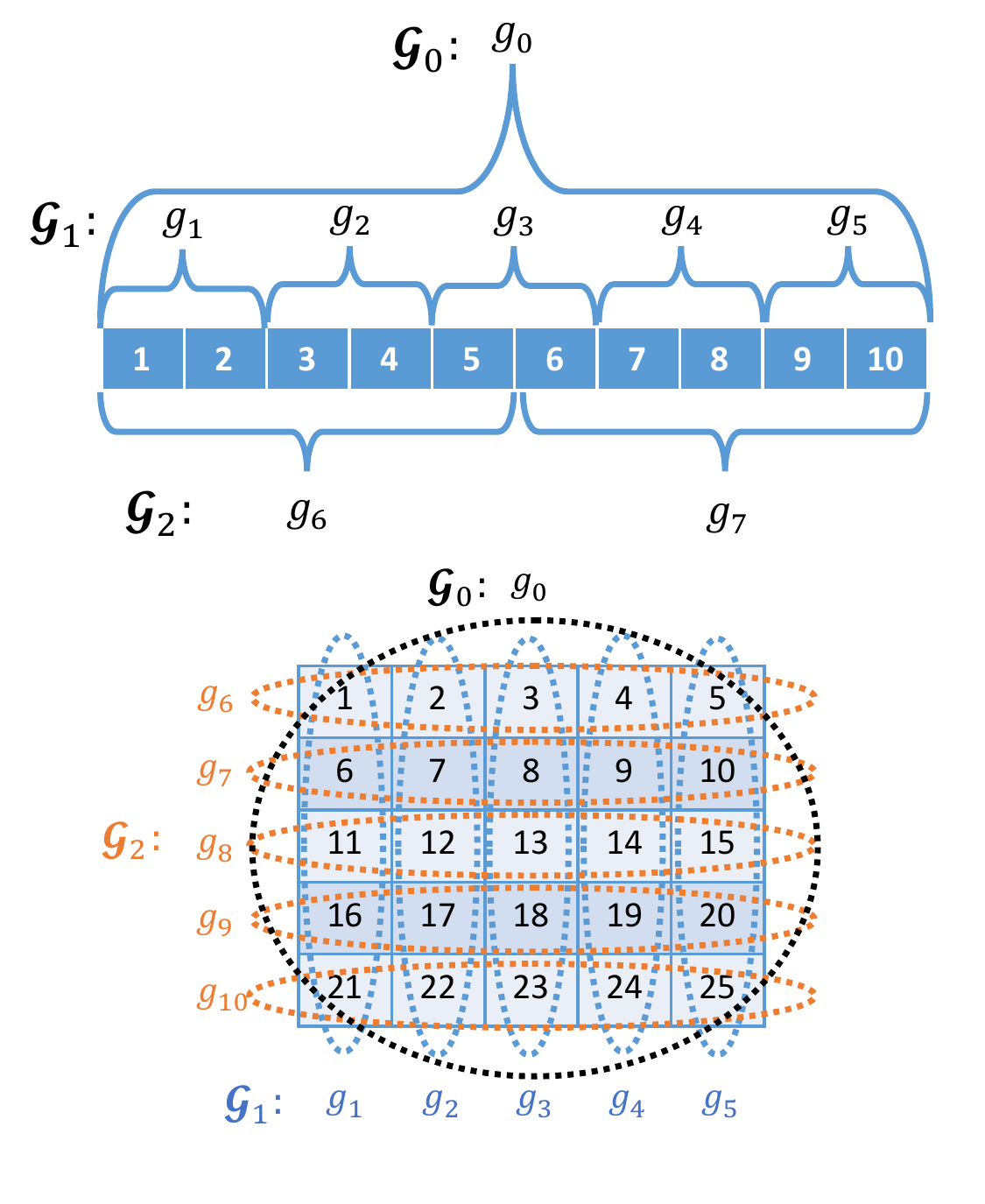}\\
\caption{Two examples of groups for TVCS model. The first model has $p=10$, $\mathcal{G}_0=\{g_0\}, \mathcal{G}_1=\{g_1,g_2,g_3,g_4,g_5\},\mathcal{G}_2=\{g_6,g_7\}$. The second model organizes elements as matrix ($p=25$). Each row and column is a group, and $\mathcal{G}_0=\{g_0\}, \mathcal{G}_1=\{g_1,g_2,g_3,g_4,g_5\},\mathcal{G}_2=\{g_6,g_7,g_8,g_9,g_{10}\}$.}\label{fig:tvcs}
\end{figure}

The TVCS model is motivated from some important applications, for example, in recommendation, task-worker assignment, and bioinformatics.
\begin{itemize}
\item {\bf Online recommendation.} Suppose we want to recommend a certain number of books (among $p$ books) to a customer - corresponding to the $\mathcal{G}_0$ based sparsity constraint. Among the selected books, we want to maintain some diversities - the recommended books by the same author should not exceed a certain number ($\mathcal{G}_1$ based sparsity constraint) and about the same topic should not exceed a certain number either ($\mathcal{G}_2$ based sparsity constraint). One can refer to the top graph in Figure~\ref{fig:tvcs}: $\mathcal{G}_1$ is grouped by authors and $\mathcal{G}_2$ is grouped by topics.
\item {\bf Task-worker assignment.} Suppose we have a bunch of tasks and workers, and we want to assign the tasks to workers. For example, in crowdsourcing, we usually assign several different workers to each task since we want to use the answers from multiple workers to improve the accuracy. On the other hand, each worker is usually assigned to multiple tasks so there is a ``many to many'' relationship in this assignment.
The goal is to pursue the optimal assignment under a certain criteria in crowdsourcing, while satisfying some restrictions. For example, the total assignments should be bounded by the total budget (corresponding to $\mathcal{G}_0$), the total cost of assignments to a single worker cannot exceed a certain threshold (corresponding to $\mathcal{G}_1$), and the total cost of assignments on a single task cannot exceed a certain threshold (corresponding to $\mathcal{G}_2$). Let $X$ be the assignment matrix, and its rows are indexed by workers and the columns are indexed by tasks. These constraints can be illustrated by the bottom graph in Figure~\ref{fig:tvcs}.
\item {\bf Identification of gene regulatory networks.} The essential goal of identifying gene regulatory network is to identify a weighted directed graph, which can be represented by a square matrix $W$ with $p=N\times N$ elements in total where $N$ is the number of vertices. A sparse network constraint is to restrict the in-degree and out-degree for each vertex, which corresponds to the sparsity in each row and column of $W$.
\end{itemize}

To solve the TVCS constrained projection~\eqref{eq:proj_obj}, we show an interesting connection between the projection and a linear programming (LP) that the vertex solution to this linear programming is an integer solution which solves the original problem.

To find an integer solution to such LP efficiently, we formulate it into a feasibility problem, and further an equivalent quadratic convex optimization. By using the rounding technique, we can avoid solving the exact solution of this LP problem. We propose an iterative algorithm to solve it and each iteration can be completed in linear time. We also show that the iterate linearly converges to the optimal point. Finally, the proposed TVCS model is validated by the synthetic experiment and two important and novel applications in identification of gene regulatory networks and task assignment problem of crowdsourcing.

\section{Related Works}
Recent years have witnessed many research works in the field of structured sparsity and group-based sparsity. \citet{yuan2006model} introduced the group LASSO, which pursues group-wise sparsity that restricts the number of groups for the selected variables. \citet{jenatton2011proximal} construct a hierarchical structure over the variables and use group LASSO with overlapped groups to solve it. Exclusive LASSO~\citep{zhou2010exclusive,kong2014exclusive} was proposed for the exclusive group sparsity which can be treated as relaxing our cardinality constraints to convex regularizations. In~\citep{kong2014exclusive}, the authors discussed the overlapping situation and tried to solve the problem using convex relaxation, which is different from our approach. Besides the aforementioned works, some proposed more general models to cover various sparsity structures. \citet{bach2012structured} extended the usage of $L_1$-norm relaxation to several different categories of structures. And recently, another generalization work~\citep{el2015totally} proposed convex envelopes for various sparsity structures. They built the framework by defining a totally unimodular penalty, and showed how to formulate different sparsity structures using the penalty. The work above concentrated on using convex relaxation to control the sparsity.

Besides using convex relaxation, there are also some works focusing on projection-based methods. When the exact projection operator was provided, \citet{baraniuk2010model} extended the traditional IHT and CoSaMP methods to general sparsity structures. In this work, the authors also introduced the projection operator for block sparsity and tree sparsity. \citet{cevher2009recovery} investigated cluster sparsity and they applied dynamic programming to solve the projection operator for their sparsity model. \citet{hegde2009compressive} introduced a ``spike trains'' signal model, which is also related to exclusive group sparsity. Its groups always have consecutive coordinates, and each group cannot contain more than one nonzero element. To solve the projection problem of their model, they showed the basic feasible solutions of the relaxed linear programming (LP) are always integer points. In our work, we also use LP to solve the projection problem. But our model defines the group structure differently and aims at different applications.

In addition, there are some works for the cases without an efficient exact projection operator~\citep{hegde2015approximation,hegde2015nearly,nguyen2014linear}. This is meaningful since the projection operator for complex structured sparsity often involves solving complicated combinatorial optimization problems. \citet{hegde2015approximation} discussed how to guarantee convergence if using approximate projection in IHT and CoSaMP for compressive sensing. They proved that the convergence needs a ``head approximation'' to project the update (gradient) before applying it. \citet{hegde2015nearly} proposed a general framework to formulate a series of models as a weighted graph, and designed an efficient approximate projection operator for the models. \citet{nguyen2014linear} applied the approximate projection-based IHT and CoSaMP to general convex functions and stochastic gradients.

\section{Preliminary: GradMP and IHT Frameworks}

This section briefly reviews two commonly used algorithm frameworks to solve the cardinality constrained optimization~\eqref{eq:main}: iterative hard thresholding (IHT)~\citep{yuan2014gradient,nguyen2014linear} and gradient matching pursuit (GradMP)~\citep{gradmp,nguyen2014linear} (the general version of CoSaMP~\citep{needell2009cosamp}) for solving cardinality constrained problem. Other methods like hard thresholding pursuit (HTP) also follows similar steps and has been shown to be effective both empirically and theoretically~\citep{yuan2016exact}. The procedures of IHT and GradMP for our model are shown in Algorithms~\ref{alg:stoiht} and \ref{alg:gradmp}, where $\supp{\cdot}$ is the support set of the argument vector.


Therefore, one can see that the efficiency of both algorithms relies on the computation of the gradient and the projection. To avoid the expensive computation of the gradient, GradMP and IHT can be extended to the stochastic versions~\citep{nguyen2014linear} by assigning $\g$ the stochastic gradient at the gradient computation step.

Both Algorithms~\ref{alg:stoiht} and ~\ref{alg:gradmp} (and their stochastic variants) guarantee some nice properties: the iterate converges to a small ball surrounding the true solution at a linear rate under certain RIP-type conditions~\citep{nguyen2014linear} and the radius of such ball converges to zero when the number of samples goes to infinity.

\begin{algorithm2e}
 \SetAlgoLined
 \KwIn{Sparsity parameter $\s$.}
 \KwResult{Problem solution $\w^t$.}
 Initialize $\w^0, t=0$;\\
 \While{stop criterion is not met}{
    $\g={\nabla} f(\w^t)$ \tcp*{Gradient computation}
    $\z^t=\w^t-\gamma \g$ \tcp*{Gradient descent}
    $\w^{t+1}=\proj_{\Omega(\mathcal{G}, \s)}(\z^t)$ \tcp*{Projection}
    $t=t+1$;\\
 }
\caption{Iterative Hard Thresholding.}
\label{alg:stoiht}
\end{algorithm2e}

\begin{algorithm2e}
\label{alg:gradmp}
 \SetAlgoLined
 \KwIn{Sparsity parameter $\s$.}
 \KwResult{Problem solution $\w^t$.}
 Initialize $\w^0, t=0$;\\
 \While{stop criterion is not met}{
    $\g=\nabla f(\w^t)$\tcp*{Gradient computation}
    $\Gamma=\supp{\proj_{\Omega(\mathcal{G}, 2\s)}(\g)}$; \\
    $\hat{\Gamma}=\Gamma \cup \supp{\w^t}$  \tcp*{Subspace selection}
    $\z^t=\argmin_{\supp{\z}=\hat{\Gamma}} f(\z)$ \tcp*{Subspace optimization}
    $\w^{t+1}=\proj_{\Omega(\mathcal{G}, \s)}(\z^t)$ \tcp*{Projection}
    $t=t+1$;\\
 }
\caption{Gradient Matching Pursuit.}
\label{alg:gradmp}
\end{algorithm2e}

A common component in Algorithms~\ref{alg:stoiht} and \ref{alg:gradmp} is the projection operator. If all the groups except $[p]$ in $\mathcal{G}$ do not overlap each other, the projection problem can be easily solved by sequential projections~\citep{yangbenefits}. But for those cases involving overlapped groups, it is generally challenging to solve them efficiently.

\section{Projection Operator}
This section introduces how to solve the essential projection step. Note that the projection onto a nonconvex  set is NP-hard in general. By utilizing the special structure of TVCS, we show that the  projection can be solved efficiently.
Due to the page limitation, all proofs are provided in the supplementary material.

\subsection{LP Relaxation}
Firstly, we can cast the projection problem~\eqref{eq:proj_obj} to an equivalent integer linear programming problem (ILP) according to Lemma~\ref{thm:proj_ip}.

\begin{lemma}\label{thm:proj_ip}
The projection problem~\eqref{eq:proj_obj} is equivalent to the following integer linear programming problem (ILP):
\begin{align}
\max_\x \quad & \langle \v^2, \x \rangle \label{eq:proj_ilp}\\
\text{subject to}\quad & A\x \leq  \s \notag\\
& {\x\in \{0,1\}^p} \notag
\end{align}
where $\v^2$ is applying element-wise square operation on vector $\v$. $A$ is a $|\mathcal{G}| \times p$ matrix which is defined as:
\begin{equation}
A=
\begin{bmatrix}
\1^\top \\
C
\end{bmatrix}
\label{eq:A_def}
\end{equation}
where $C\in \{0,1\}^{|\mathcal{G}_1 \cup \mathcal{G}_2| \times p}$, whose rows represent the indicator vector of each group $g\in \mathcal{G}_1$ and $\mathcal{G}_2$.

Each row in $A$ corresponds to one group $g$ from $\mathcal{G}$.
For example, $C_{ij}=1$ if the $j$-th coordinate is in the $i$-th group, otherwise $C_{ij}=0$.
The first row $\1^\top$  corresponds to the overall sparsity i.e. $\mathcal{G}_0$.

\end{lemma}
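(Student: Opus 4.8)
The plan is to exploit the fact that, once the \emph{support} of the projected vector is fixed, the optimal values on that support are determined in closed form, which collapses the continuous minimization over $\w$ into a purely combinatorial selection problem over supports. Concretely, for any feasible $\w \in \Omega(\mathcal{G},\s)$ with support $S = \supp{\w}$, I would split the objective coordinatewise as $\|\w-\v\|^2 = \sum_{i\notin S} v_i^2 + \sum_{i\in S}(w_i - v_i)^2$. Since the cardinality constraints only restrict \emph{which} coordinates may be nonzero and impose nothing on their magnitudes, the second sum can be driven to zero by setting $w_i = v_i$ for every $i\in S$, giving the identity $\min_{\w:\,\supp{\w}=S}\|\w-\v\|^2 = \|\v\|^2 - \sum_{i\in S} v_i^2$. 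Thus minimizing the squared distance is equivalent to maximizing $\sum_{i\in S} v_i^2$ over all \emph{admissible} supports $S$.

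Next I would encode admissible supports as binary vectors. Writing $\x\in\{0,1\}^p$ for the indicator of $S$, the quantity $\sum_{i\in S}v_i^2$ is exactly $\langle \v^2,\x\rangle$, matching the ILP objective~\eqref{eq:proj_ilp}. For the constraints, the cardinality condition $\|\w_g\|_0 \le \s^g$ is equivalent to $|S\cap g|\le \s^g$, and by the construction of $A$ in~\eqref{eq:A_def} we have $(C\x)_g = \sum_{i\in g} x_i = |S\cap g|$ for every $g\in\mathcal{G}_1\cup\mathcal{G}_2$, while the first row gives $\1^\top\x = |S|$, enforcing the overall sparsity from $\mathcal{G}_0=\{[p]\}$. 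Hence $S$ is admissible if and only if $A\x\le \s$, so the feasible sets of the two problems correspond exactly under $S\leftrightarrow\x$.

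To finish, I would establish both directions of the equivalence. Given any feasible $\x$ for the ILP, define $\w$ by $w_i = v_i$ when $x_i=1$ and $w_i=0$ otherwise; then $\supp{\w}\subseteq\{i:x_i=1\}$, so $\w$ is feasible and $\|\w-\v\|^2 = \|\v\|^2 - \langle\v^2,\x\rangle$. Conversely, given an optimal $\w^\star$ of~\eqref{eq:proj_obj}, I may assume $w^\star_i=v_i$ on $S^\star=\supp{\w^\star}$ (otherwise replacing $w^\star_i$ by $v_i$ only decreases the objective), and take $\x^\star$ to be the indicator of $S^\star$, which is feasible and satisfies $\langle\v^2,\x^\star\rangle = \|\v\|^2 - \|\w^\star-\v\|^2$. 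Combining these yields $\min_\w\|\w-\v\|^2 = \|\v\|^2 - \max_\x\langle\v^2,\x\rangle$, with optimizers in correspondence.

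The only delicate point, which I expect to be the main thing to argue carefully, is the handling of coordinates with $v_i=0$: such coordinates contribute nothing to either objective, so an optimal $\x$ may set them arbitrarily, and the map from $\x$ back to $\w$ can shrink the support (when $x_i=1$ but $v_i=0$, then $w_i=0$). I would resolve this by observing that zeroing out any $x_i$ with $v_i=0$ keeps $\x$ feasible, since it only relaxes the constraints $A\x\le\s$, without changing $\langle\v^2,\x\rangle$; hence one can always choose an optimal $\x^\star$ whose $1$-entries coincide with a genuine projection support, making the correspondence between \emph{solutions} (not merely optimal values) clean.
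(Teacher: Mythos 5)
Your proposal is correct and takes essentially the same route as the paper's own proof: reduce the projection to a support-selection problem via $\min_{\supp{\w}=S}\|\w-\v\|^2 = \|\v\|^2 - \sum_{i\in S}v_i^2$, encode supports as indicator vectors $\x\in\{0,1\}^p$ so that the constraints become $A\x\le\s$, and observe that minimizing the distance is the same as maximizing $\langle \v^2,\x\rangle$. Your extra care with coordinates where $v_i=0$ (so that optimal $\x$'s correspond to genuine projection supports, not just optimal values) is a point the paper leaves implicit, but it does not change the substance of the argument.
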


%

It is NP-hard to solve an ILP in general. One common way to handle such ILP is making a linear programming (LP) relaxation. In our case, we can use a box constraint $\x\in[0,1]^p$ to replace the integer constraint $\x\in\{0,1\}^p$:
\begin{align}
\max_\x \quad & \langle \v^2, \x \rangle \label{eq:proj_lp}\\
\text{subject to}\quad & A\x \leq  \s \notag\\
& {\x\in [0,1]^p} \notag
\end{align}
However, there is no guarantee that a general ILP can be solved via its LP relaxation, because the solution of the relaxed LP is not always integer. Although one can make a rounding to the LP solution and acquire a integer solution, such solution is not guaranteed to be optimal (or even feasible) to the original ILP.

Fortunately, due to the special structure of our TVCS model, we find that its relaxed LP has some nice properties which make it possible to get the optimal solution of the ILP efficiently. The following theorem reveals the relationship between the ILP problem and the relaxed LP problem.

\begin{theorem}\label{thm:iplp}
Given $\mathcal{G}$ satisfying $TVCS$, all the vertices of the feasible set to \eqref{eq:proj_lp} are integer points. Furthermore, there is an optimal solution on the vertex that solves the ILP~\eqref{eq:proj_ilp}.
\end{theorem}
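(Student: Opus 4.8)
The plan is to prove the stronger statement that the full constraint matrix of the LP~\eqref{eq:proj_lp} is \emph{totally unimodular} (TU); integrality of all vertices and the existence of an integral optimal vertex then follow from standard polyhedral theory. Concretely, the feasible set is the polytope $P=\{\x : A\x\le \s,\ \0\le \x\le \1\}$, whose vertices are the basic feasible solutions of the system with constraint matrix $[A^\top\ \I\ -\I]^\top$ and (integral) right-hand side $[\s^\top\ \1^\top\ \0^\top]^\top$. Since appending $\pm\I$ rows preserves total unimodularity, it suffices to show that $A$ itself is TU; combined with the integrality of $\s$ (the sparsity bounds are nonnegative integers), this forces every basic feasible solution, hence every vertex, to be integral.

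To establish that $A$ is TU I would invoke the Ghouila--Houri criterion: a $\{0,1\}$ matrix is TU if and only if every subset $R$ of its rows can be partitioned into $R_1\cup R_2$ so that for each column $j$ the signed column sum $\sum_{i\in R_1}A_{ij}-\sum_{i\in R_2}A_{ij}$ lies in $\{-1,0,1\}$. The key structural fact supplied by TVCS is that each coordinate $j\in[p]$ belongs to at most one group of $\mathcal{G}_1$ and at most one group of $\mathcal{G}_2$ (disjointness within each hyper set), while it always lies in the single group $[p]$ of $\mathcal{G}_0$. Thus every column of $A$ has a $1$ in the $\1^\top$ row and at most one further $1$ in the $\mathcal{G}_1$-block and at most one in the $\mathcal{G}_2$-block.

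The main obstacle, and the only place the argument is not completely routine, is the global row $\1^\top$ from $\mathcal{G}_0$, which introduces a third nonzero into some columns and therefore rules out the standard ``at most two ones per column'' bipartite/interval TU templates. I would resolve this by choosing the Ghouila--Houri partition according to whether $R$ contains the $\1^\top$ row. If $\1^\top\in R$, take $R_1=\{\1^\top\}$ and $R_2=R\setminus\{\1^\top\}$: for each column $j$ the signed sum equals $1-(\text{number of selected }\mathcal{G}_1\cup\mathcal{G}_2\text{ groups containing }j)\in\{1,0,-1\}$, since $j$ lies in at most two such groups. If $\1^\top\notin R$, take $R_1=R\cap\mathcal{G}_1$ and $R_2=R\cap\mathcal{G}_2$: each of $\sum_{i\in R_1}A_{ij}$ and $\sum_{i\in R_2}A_{ij}$ is in $\{0,1\}$ by disjointness, so their difference is again in $\{-1,0,1\}$. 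In both cases the criterion holds, so $A$ is TU. (An alternative, more combinatorial route would show directly that any feasible point with a fractional coordinate admits a feasible $\pm$-perturbation built from an alternating-sign cycle through the tight groups, but the TU route is cleaner and isolates the role of the $\mathcal{G}_0$ row.)

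Finally, for the second assertion, since the objective $\langle \v^2,\x\rangle$ is linear and $P$ is a nonempty ($\x=\0\in P$) bounded polytope, the maximum is attained at a vertex $\x^*$, which by the above is integral and hence feasible for the ILP~\eqref{eq:proj_ilp}. Because the ILP feasible set is contained in $P$, the LP optimum dominates the ILP optimum; as $\x^*$ is an integral point achieving the LP optimum, it belongs to the ILP feasible set and attains that value, so it solves the ILP. I expect the entire proof to hinge on the partition choice above, with everything else reducing to standard TU facts (Ghouila--Houri, preservation under appending $\pm\I$, and TU together with an integral right-hand side implying integral vertices).
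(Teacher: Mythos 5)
Your proof is correct, but it takes a genuinely different route from the paper at the key technical step. Both arguments reduce the theorem to showing that the TVCS constraint matrix $A$ in \eqref{eq:A_def} is totally unimodular, and both finish the same way: appending identity rows preserves TU (the paper's Proposition~\ref{prop:idtu}), TU plus an integral right-hand side gives integral vertices (the paper's Lemma~\ref{lem:tuint}, citing the standard Hoffman--Kruskal-type result), and a linear objective over the nonempty bounded polytope attains its optimum at an (integral) vertex, which is then feasible and optimal for the ILP~\eqref{eq:proj_ilp}. The difference is how TU of $A$ itself is established. The paper first shows the submatrix $C$ (rows indexed by $\mathcal{G}_1\cup\mathcal{G}_2$) is TU via the ``at most two ones per column, bipartitioned rows'' criterion (Lemma~\ref{lem:tu2}), and then absorbs the problematic all-ones row $\1^\top$ by a direct, fairly lengthy determinant case analysis on square submatrices (Lemma~\ref{lem:tu}): three cases depending on whether some column has one, three, or exactly two ones, using row/column eliminations and an inductive shrinking argument. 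You instead invoke the Ghouila--Houri criterion and handle the $\1^\top$ row uniformly by choosing the row partition adaptively: if the subset $R$ contains $\1^\top$, put it alone in $R_1$ and all selected group rows in $R_2$ (signed column sums lie in $\{1,0,-1\}$ since each coordinate is in at most one $\mathcal{G}_1$ group and at most one $\mathcal{G}_2$ group); otherwise split $R$ along the $\mathcal{G}_1$/$\mathcal{G}_2$ bipartition. Your partition choices are verified correctly in both cases, so the argument is sound. What each approach buys: yours is considerably shorter, isolates exactly where the TVCS structure enters, and sidesteps the error-prone determinant bookkeeping; the paper's is more elementary and self-contained (it needs only the definition of TU and basic determinant operations, not the Ghouila--Houri characterization), which is presumably why the authors chose it. One small point to make explicit in your write-up: the bounds $\s$ must be integral for the integral-vertex conclusion, and feasibility of the polytope requires $\s\geq \0$ (so that $\x=\0$ is feasible); both hold by the definition of sparsity bounds, as you note in passing.
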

This theorem suggests that finding a vertex solution of the relaxed LP can solve the original projection problem onto a TVCS $\mathcal{G}$. The proof basically shows that matrix $A$ (for TVCS) is a totally unimodular matrix~\citep{papadimitriou1982combinatorial}. We provide the detailed proof in the supplementary material.

\subsection{Linearly Convergent Algorithm for Projection Operator onto TVCS}
To find a solution on the vertex, one can use the Simplex method. Although Simplex method guarantees to find an optimal solution on the vertex and could be very efficient in practice, it does not have a deterministic complexity bound. In the IHT and GradMP algorithms, projection operator is only a sub-procedure in one iteration. Hence, we are usually supposed to solve lots of instances of problem~\eqref{eq:proj_ilp}. Simplex might be efficient practically, but its worst case may lead to exponential time complexity~\citep{papadimitriou1982combinatorial}. In this section, the integer solution to the linear programming can be found within the complexity proportional to the number of variables and constraints.

\paragraph{Equivalent Feasibility Problem Formulation.} The dual of LP problem~\eqref{eq:proj_lp} can be written as:
\begin{align}
\min_\y \quad & \langle [\s^\top\ \1^\top]^\top, \y \rangle \label{eq:proj_lpdual}\\
\text{subject to}\quad & \begin{bmatrix} A^\top \ I \end{bmatrix} \y \geq \v^2, \y \geq \0 \notag
\end{align}

Since the duality gap of LP is zero, combining the primal LP~\eqref{eq:proj_lp} and dual LP~\eqref{eq:proj_lpdual}, we can formulate an equivalent problem, i.e. the feasibility problem over the following constraints:
\begin{align*}
\text{find}\quad & \x, \y \\
\text{subject to}\quad & \langle [\s^\top\ \1^\top]^\top, \y \rangle = \langle \v^2, \x \rangle\\
& \begin{bmatrix} A^\top \ I \end{bmatrix} \y \geq \v^2 \\
& \begin{bmatrix} A \\ I \end{bmatrix}\x \leq \begin{bmatrix} \s \\ \1 \end{bmatrix}\\
& \y \geq \0, \x \geq \0
\end{align*}

\paragraph{Iterative Algorithm.}
The feasibility problem with linear constraints above is equivalent to the following optimization problem:
\begin{align}
\min_{\x, \y}\quad & {1\over 2} (\langle [\s^\top\ \1^\top]^\top, \y \rangle - \langle \v^2, \x \rangle)^2 \notag\\
& + {1\over 2} \| [\v^2 - [A^\top \  I]\y]_+ \|^2 + {1\over 2} \| [A\x-\s]_+ \|^2 \label{eq:lpgd}\\
\text{subject to}\quad & \0 \leq \x \leq \1, \y \geq 0 \notag
\end{align}
where $[\z]_+$ is the element-wise hinge operator, i.e. it transforms each element $\z_i$ to $\max(\z_i, 0)$.

This is a convex optimization problem with {\bbc a} quadratic objective and box constraints. We adopt the projected gradient descent to solve this problem, and show it converges linearly.

\begin{theorem}\label{thm:iteralgo}
For the optimization problem with the form
\begin{align*}
\min_\z \quad & f(\z) := \|[A\z-a]_+ \|^2 + \|B\z-b\|^2\\
\text{subject to} \quad & \z\in \Omega
\end{align*}
where $\Omega = \{\z \mid C\z \leq c\}$, the projected gradient descent algorithm $\z^{t+1} \leftarrow P_\Omega(\z^t - \gamma\nabla f(\z^t))$ has a linear convergence rate with some $\alpha<1$ (depending on $A$ and $B$):
$$\|\z^{t+1}-P_{\z^*}(\z^{t+1})\| \leq \alpha \|\z^{t}-P_{\z^*}(\z^{t})\|,$$
where $P_{\z^*}(\cdot)$ is the projection onto the optimal solution set.
\end{theorem}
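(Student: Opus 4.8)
The plan is to prove linear convergence through a Hoffman-type error bound rather than strong convexity, since $f$ is in general only convex and piecewise quadratic and need not be strongly convex (the matrices $A$ and $B$ may be rank deficient). First I would record the regularity of the objective. Writing $\nabla f(\z) = 2A^\top [A\z-a]_+ + 2B^\top(B\z-b)$, I note that $\z \mapsto [A\z-a]_+$ is monotone and $1$-Lipschitz while $t \mapsto ([t]_+)^2$ is $C^1$, so $f$ is convex with a globally $L$-Lipschitz gradient, $L \le 2(\|A\|^2+\|B\|^2)$. For a step size $\gamma \le 1/L$, projected gradient descent then obeys the standard sufficient-decrease estimate, which I would phrase via the gradient mapping $G_\gamma(\z) := \tfrac{1}{\gamma}(\z - P_\Omega(\z-\gamma\nabla f(\z)))$ as $f(\z^{t+1}) \le f(\z^t) - \tfrac{\gamma}{2}\|G_\gamma(\z^t)\|^2$, together with the identity $\|\z^{t+1}-\z^t\| = \gamma\|G_\gamma(\z^t)\|$.

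The heart of the argument is a global error bound on the sublevel set $\{\z\in\Omega : f(\z) \le f(\z^0)\}$: there exists $\kappa>0$ with $\operatorname{dist}(\z,\mathcal{Z}^*) \le \kappa\,\|G_\gamma(\z)\|$, where $\mathcal{Z}^* = \argmin_{\z\in\Omega} f(\z)$. I would obtain this from the Luo--Tseng theory of error bounds for convex piecewise-quadratic minimization over a polyhedron. The structural facts to verify are: (i) $\mathcal{Z}^*$ is a nonempty polyhedron --- nonemptiness because $f$ is convex piecewise quadratic and bounded below by $0$ on $\Omega$ (a Frank--Wolfe-type attainment result), and polyhedrality because $\nabla f$ is constant, equal to some $\bar g$, on $\mathcal{Z}^*$ (a standard property of smooth convex minimizers), so that $\mathcal{Z}^*$ is cut out by an affine system such as $\{A\z-a=\bar u,\ B\z-b=\bar v,\ \langle \bar g,\z\rangle = \text{const}\}$ intersected with the inequalities fixing the active pieces; and (ii) the sublevel set decomposes into finitely many polyhedral pieces on each of which $f$ is a single fixed quadratic. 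On each piece the optimality residual becomes an affine function of $\z$, so Hoffman's lemma yields a local error bound, and taking the maximum over the finitely many Hoffman constants produces the global $\kappa$.

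Given the error bound, I would close in the usual way. Letting $\z^{t+\!*}=P_{\mathcal{Z}^*}(\z^t)$ and using nonexpansiveness of $P_\Omega$ to bound the change in distance by the step length, the sufficient-decrease inequality gives $\operatorname{dist}(\z^{t+1},\mathcal{Z}^*)^2 \le \operatorname{dist}(\z^t,\mathcal{Z}^*)^2 - c\,\|G_\gamma(\z^t)\|^2$ for some $c>0$. The error bound $\|G_\gamma(\z^t)\| \ge \kappa^{-1}\operatorname{dist}(\z^t,\mathcal{Z}^*)$ then converts this into the contraction $\operatorname{dist}(\z^{t+1},\mathcal{Z}^*)^2 \le (1-c\kappa^{-2})\operatorname{dist}(\z^t,\mathcal{Z}^*)^2$, yielding the claimed rate with $\alpha=\sqrt{1-c\kappa^{-2}}<1$, which depends on $A,B$ through $L$ and $\kappa$ exactly as asserted.

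I expect the main obstacle to be establishing (i)--(ii) of the error bound: showing that the residual entering the Hoffman estimate is genuinely affine on each polyhedral piece, that the pieces are finite and compatible with $\Omega$, and that a single constant $\kappa$ therefore works across the whole sublevel set. In particular, proving that $\nabla f$ is constant on $\mathcal{Z}^*$ and reconciling this with the partition induced by constraint activity and by the hinge breakpoints requires care; everything after the error bound is routine first-order-method bookkeeping.
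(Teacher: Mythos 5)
Your high-level strategy --- replacing strong convexity by a Hoffman-type error bound --- is the same philosophy as the paper's, but the way you obtain the error bound is genuinely different and much heavier. The paper never needs Luo--Tseng piecewise-quadratic theory, because it exploits a fact your proposal ignores: this theorem is only ever invoked for the feasibility problem \eqref{eq:lpgd}, where $f^*=0$ is guaranteed by LP strong duality. Once $f^*=0$, the objective is exactly the squared residual of the \emph{consistent} linear system $\{A\z\le a,\ B\z=b,\ C\z\le c\}$, so a single application of Hoffman's theorem to that system gives the quadratic-growth inequality $f(\z)-f^*\ge\frac{\lambda}{2}\|\z-P_{\z^*}(\z)\|^2$ for all $\z\in\Omega$ (the residual $\|[C\z-c]_+\|^2$ vanishes on $\Omega$); this is the paper's Lemma~\ref{lem:osc}. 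All of the structure you plan to verify --- attainment of the minimum, constancy of $\nabla f$ on $\mathcal{Z}^*$, polyhedrality of $\mathcal{Z}^*$, the finite decomposition into quadratic pieces, and patching finitely many Hoffman constants into one $\kappa$ --- simply disappears. What your route buys is generality: it would also cover $f^*>0$ (an inconsistent system), which the theorem's statement formally allows but the application never needs. What it costs is that the crux of your proof is outsourced to machinery whose verification you yourself flag as the main obstacle, whereas the paper's lemma is three lines.

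There is also a concrete gap in your closing step: the inequality $\mathrm{dist}(\z^{t+1},\mathcal{Z}^*)^2\le\mathrm{dist}(\z^t,\mathcal{Z}^*)^2-c\,\|G_\gamma(\z^t)\|^2$ does \emph{not} follow from nonexpansiveness of $P_\Omega$ plus sufficient decrease. Nonexpansiveness only gives $\mathrm{dist}(\z^{t+1},\mathcal{Z}^*)\le\mathrm{dist}(\z^t,\mathcal{Z}^*)+\gamma\|G_\gamma(\z^t)\|$, which bounds the distance's \emph{increase} and cannot yield a contraction. The correct ingredient is the variational characterization of the projection, $\langle \z^{t+1}-\z^t+\gamma\nabla f(\z^t),\,u-\z^{t+1}\rangle\ge 0$ with $u=P_{\z^*}(\z^t)\in\Omega$, combined with convexity and $L$-smoothness; for $\gamma\le 1/L$ this yields $\|\z^{t+1}-u\|^2\le\|\z^t-u\|^2-2\gamma\bigl(f(\z^{t+1})-f^*\bigr)$, and quadratic growth applied at $\z^{t+1}$ then gives the contraction with rate $1/(1+\lambda/L)$ --- which is exactly the chain of inequalities in the paper's proof. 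Alternatively, staying with your gradient-mapping error bound $\mathrm{dist}\le\kappa\|G_\gamma\|$ requires the standard but more involved Luo--Tseng cost-to-go argument to first contract function values and only then distances. Either fix is routine, but the step as you wrote it is not valid.
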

Notice that the objective function $f$ in Theorem~\ref{thm:iteralgo} is not necessarily strongly convex, which means the well recognized linear convergence conclusion from the strong convexity is not applicable here.

Theorem~\ref{thm:iteralgo} mainly applies Hoffman's Theorem \citep{hoffman2003approximate} to show that $f$ is an optimal strongly convex function \citep{liu2015asynchronous}. This leads to a linear convergence rate.

The convergence rate $\alpha=\nicefrac{1}{(1+{\lambda \over L})}$, where $\lambda$ is the Hoffman constant~\citep{hoffman2003approximate} that depends on $A,B$ and is always positive. $L$ is the Lipschitz continuous gradient constant. More details are included in the supplementary materials.

To show the complexity of this algorithm, we firstly count how many iterations we need. Since we know that we can just make a rounding\footnote{Acute readers may notice that the convergent point may be on the face of the polytope in some cases instead of vertex. However, we can add a small random perturbation to ensure the optimal point to be vertices with probability 1.} to the result $\x^t$ when we attain $\|\x^t-\tilde{\x}^*\|_\infty < 0.5$. Let $\z:=[\x^\top\ \y^\top]^\top$ represent all the variables in~\eqref{eq:lpgd}. Because $\|\z^t-{\z}^*\| \geq \|\z^t-{\z}^*\|_\infty \geq \|\x^t-{\x}^*\|_\infty$, we can do the rounding safely when $\|\z^t-{\z}^*\| < 0.5$, where ${\z}^*, {\x}^*$ are the optimal points of this problem. According to Theorem~\ref{thm:iteralgo}, we have the linear convergence rate $\alpha<1$, so the number of iterations we need is
\[
t > \log_{\alpha} {1\over 2\|\z^0-{\z}^*\|}
\]
{\bbc Therefore,} we {\bbc claim} that we can {\bbc obtain} the solution ${\x}^*$ by rounding {\bbc after $\log_{\alpha} {1\over 2\|\z^0-{\z}^*\|}$ iterations}.

Secondly, we show that the computation {\bbc complexity} in each iteration is linear with dimensionality $p$ and the {\bbc amount} of groups $|\mathcal{G}|$. Since each column of $A$ contains at most 3 nonzero elements, {\bbc the complexity of the matrix multiplications in computing the gradient of \eqref{eq:lpgd} is $O(p+|\mathcal{G}|)$}.
Together with other computation, the complexity for each iteration is $O(p+|\mathcal{G}|)$.
\section{Empirical Study}

{\rrc This section will validate the proposed method on both synthetic data and two practical applications: crowdsourcing and identification of gene regulatory networks.}
\subsection{Linear Regression and Classification on Synthetic Data}
\label{sec:linearexp}
In this section, we validate the proposed {\rrc method with linear regression objective and squared hinge objective (classification) on} synthetic data. {\rrc Let $\w \in \R^{\sqrt{p} \times \sqrt{p}}$ be a matrix, $\mathcal{G}_1$ and $\mathcal{G}_2$ are defined as groups with all rows and all columns respectively. The linear regression loss is defined as $\sum_{i=1}^n (\langle X_i, \w \rangle - y_i)^2$ and the squared hinge loss is defined as $\sum_{i=1}^n \max(0, 1-y_i\langle X_i, \w \rangle)^2$, where $n$ is the total number of training samples. $X_i$ and $y_i$ are the features and label of the $i$-th sample respectively.}

In the linear regression experiment, the true model $\bar{\w} \in \R^{\sqrt{p}\times \sqrt{p}}$ is generated from the following procedure: generate a random vector and apply the projection operator to get a support set which satisfy our sparsity constraints; the elements of positions in support set are drawn from standard normal distribution. $p$ is fixed as $400$ and $n$ is gradually increased. The group sparsity upper bounds $\s^g$ for $g\in\mathcal{G}_1$ and $g\in\mathcal{G}_2$ are uniformly generated from the integers in the range$[1,\sqrt{p}]$. The overall sparsity upper bound is set by $0.8\times\min(\sum_{g\in \mathcal{G}_1}\s^g, \sum_{g\in \mathcal{G}_2}\s^g)$. Each $X_i$'s is an $\sqrt{p} \times \sqrt{p}$ i.i.d. Gaussian random matrix. $y_i$ is generated from $y_i = \langle X_i, \bar{\w}\rangle + e_i$, where $e_i$ is the i.i.d. Gaussian random noise drawn from $\mathcal{N}(0,0.01^2)$. We compare the proposed methods to bilevel exclusive sparsity with non-overlapped groups (row-wise or column-wise)~\citep{yangbenefits}, overall sparsity~\citep{needell2009cosamp}, and exclusive LASSO~\citep{kong2014exclusive}. For fairness we project the final result of all the compared methods to satisfy all constraints.  All the experiments are repeated 30 times and we use the averaged result. We use selection recall and successful recovery rate to evaluate the performance. Selection recall is defined as $\nicefrac{|\supp{\w^*}\cap\supp{\bar{\w}}|}{\|\bar{\w}\|_0}$, where $\w^*$ is the optimization result. Successful recovery rate is the ratio of the successful feature selection i.e. $\supp{\w^*}=\supp{\bar{\w}}$ to the total number of repeated experiments. In Figure~\ref{fig:ls} we can observe that our model with all sparsity constraints always have the best performance. While the performance of exclusive LASSO and our method is comparable when the number of samples are very limited, our method outperforms exclusive LASSO when the number of samples increases.

For classification experiments, we use the same settings of sparsity with linear regression. Here we set $p=400$, and change $n$ from $200$ to $800$. The true model $\bar{\w}$ and feature matrices are generated by the same way as the linear regression experiment. The class label $y_i \in \{-1,1\}$ is got by $y_i=\text{sign}\langle X_i, \bar{\w}\rangle$. Besides the selection recall, we also compare the classification error. In Figure~\ref{fig:lc}, we can see that the superiority of our method is even more significant in the classification experiment. Although the overall sparsity has the lowest selection recall, it still has a similar classification error with the methods that consider row or column groups.

\begin{figure}[htbp]
	\begin{minipage}[b]{0.49\linewidth}
		\centering
		\centerline{\includegraphics[width=1\linewidth]{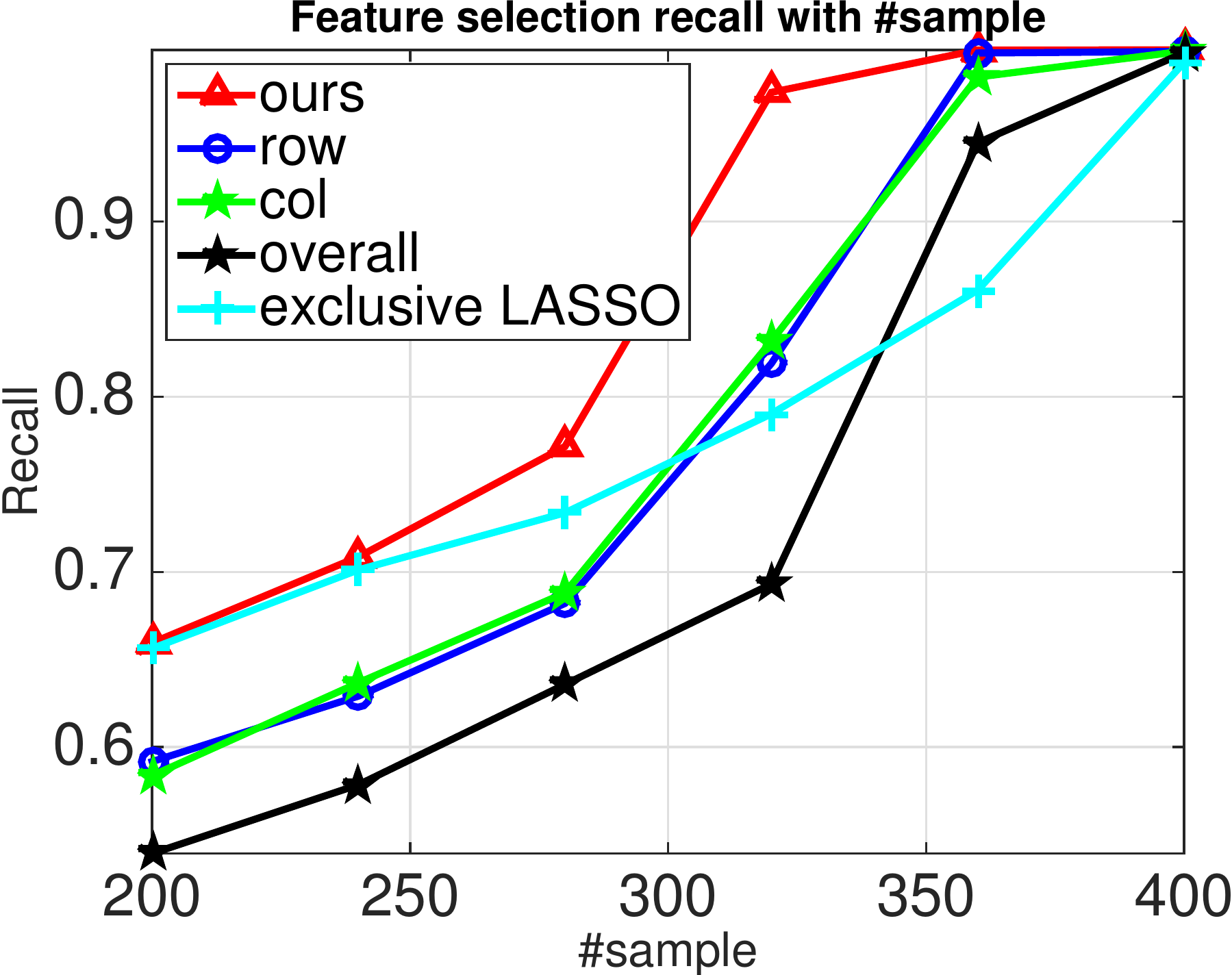}}
		\centerline{\scriptsize{(a) Selection recall.}}\medskip
	\end{minipage}
	\begin{minipage}[b]{0.49\linewidth}
		\centering
		\centerline{\includegraphics[width=1\linewidth]{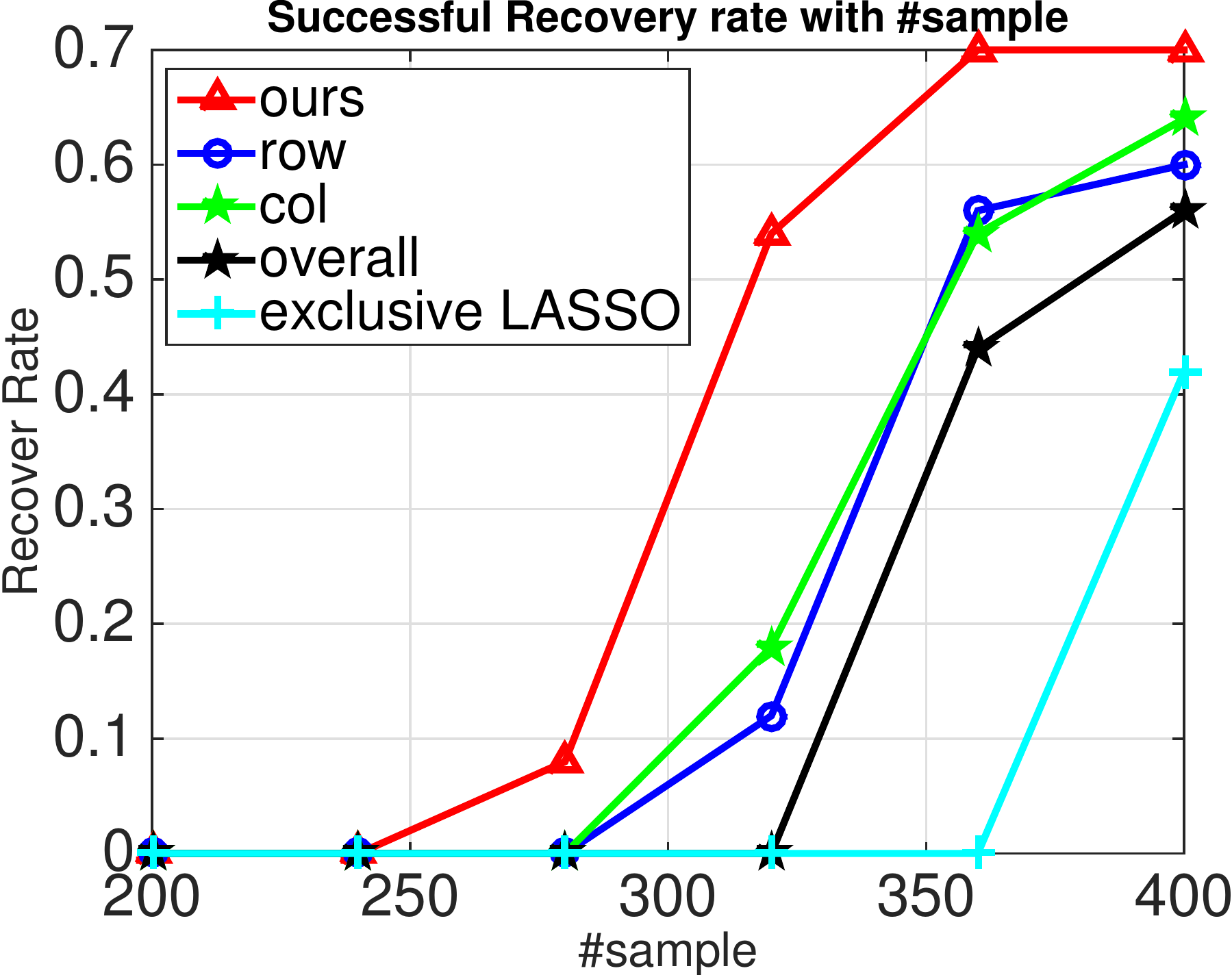}}
		\centerline{\scriptsize{(b) Successful recovery rate.}}\medskip
	\end{minipage}
	\caption{Selection recall and successful recovery rate for least square loss.}
	\label{fig:ls}
\end{figure}

\begin{figure}[htbp]
	\begin{minipage}[b]{0.49\linewidth}
		\centering
		\centerline{\includegraphics[width=1\linewidth]{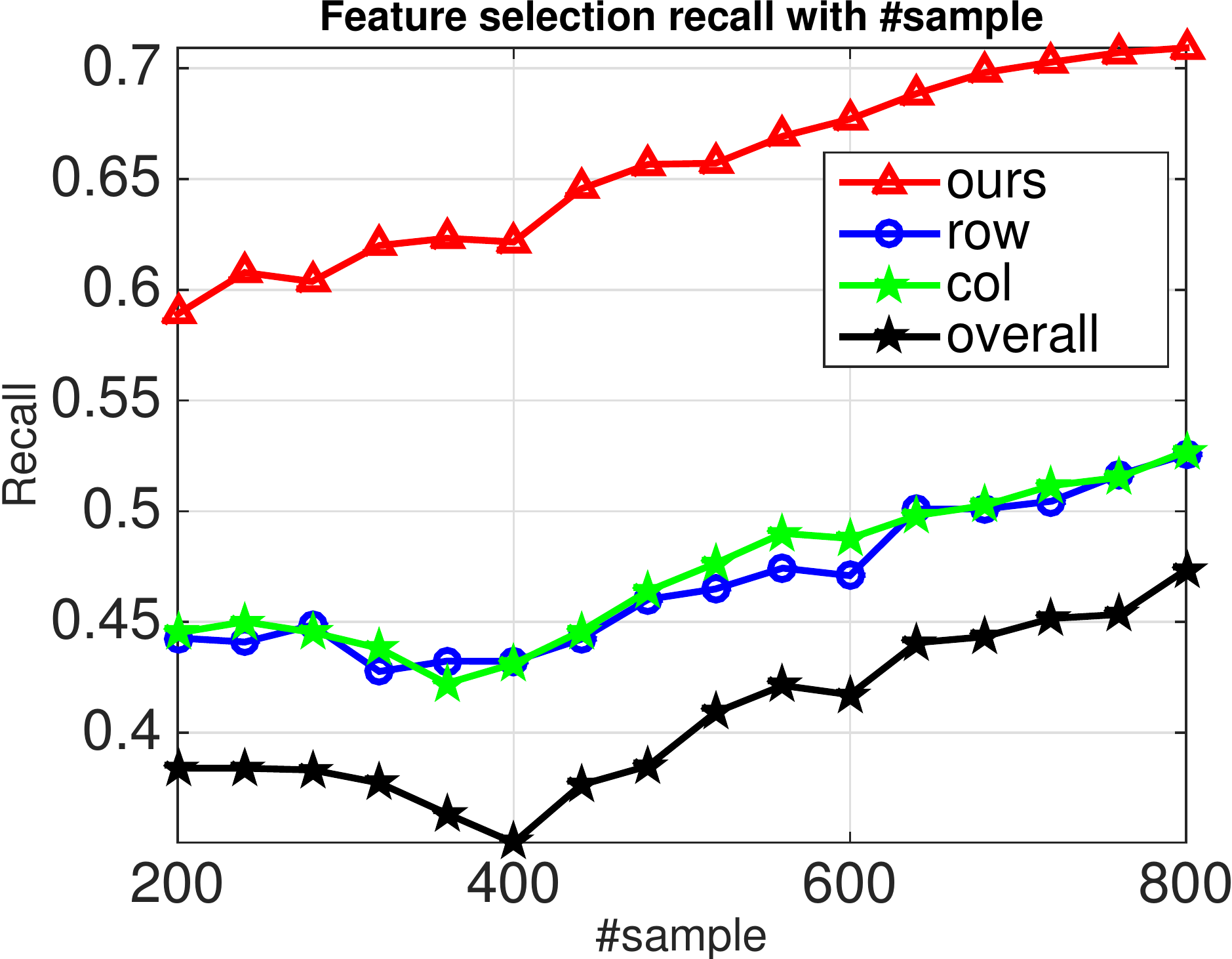}}
		\centerline{\scriptsize{(a) Selection recall.}}\medskip
	\end{minipage}
	\begin{minipage}[b]{0.49\linewidth}
		\centering
		\centerline{\includegraphics[width=1\linewidth]{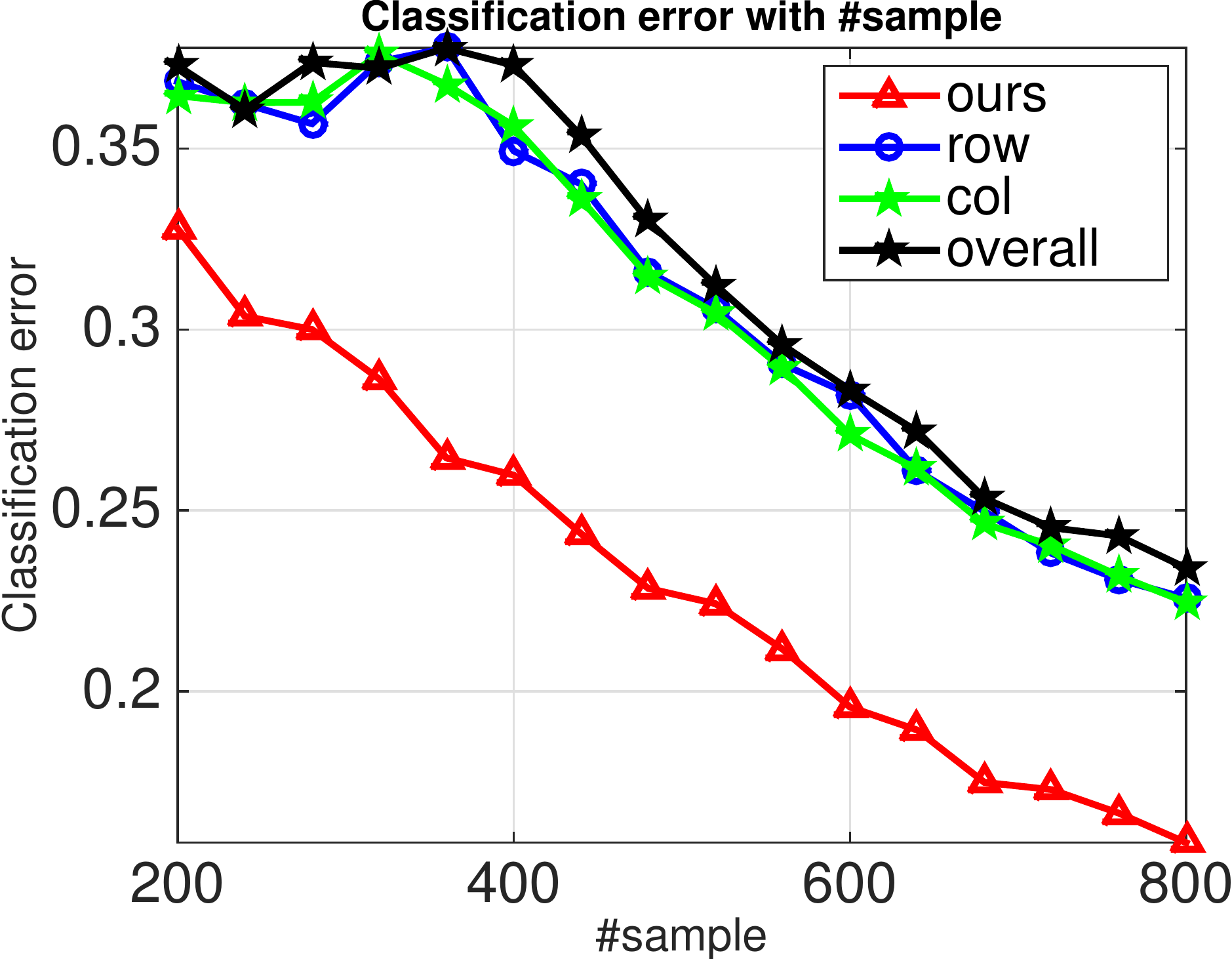}}
		\centerline{\scriptsize{(b) Classification error.}}\medskip
	\end{minipage}
	\caption{Selection recall and classification error for squared hinge loss.}
	\label{fig:lc}
\end{figure}

\subsection{Application in Crowdsourcing}
\label{sec:crowd}
This section applies the proposed method to the worker-task assignment problem in crowdsourcing. Take the image labeling task as an example. Given $n$ workers and $m$ images, each image can be assigned to multiple workers and each worker can label multiple images. The predicted label for each image is decided by all the labels provided by the assigned workers and the quality of each worker on the image. The goal is to maximize the expected prediction accuracy based on the assignment. Let $X\in \{0,1\}^{n\times m}$ be the assignment matrix, i.e. $X_{ij}=1$ if assign the $i$-th worker to $j$-th task, otherwise $X_{ij}=0$. $Q\in [0,1]^{n\times m}$ is the corresponding quality matrix, which is usually estimated from the golden standard test \citep{ho2013adaptive}. The whole formulation is defined to maximize the average of the expected prediction accuracy over $m$ tasks over a TVCS constraint:
\begin{align}
\max_{X}\quad &{1\over m}\sum_{j=1}^{m}\mathcal{E}_{\text{acc}}(Q_{\cdot,j}, X_{\cdot,j}) \label{eq:crowd}
\\
\text{subject to }\quad  &\sum_{i=1}^n X_{ij} \leq \s^{\text{worker}}, \forall j; \notag\\
& \sum_{j=1}^m X_{ij}\leq \s^{\text{task}}, \forall i; \notag\\
&\sum_{i,j}X_{ij} \leq \s^{\text{total}};\ X \in \{0,1\}^{n\times m}\notag
\end{align}
where $\mathcal{E}_{\text{acc}}(\cdot, \cdot)$ is the expected prediction accuracy, $\s^{\text{worker}}$ is the ``worker sparsity'', i.e. the largest number of assigned workers for each task, and $\s^\text{task}$ is the ``task sparsity'', i.e. each worker can be assigned with at most $\s^{\text{task}}$ tasks, and $\s^{\text{total}}$ is the total sparsity to control the budget, i.e., the maximal number of assignment. In image labeling task, we assume that each image can only have two possible classes and the percentage of images in each class is one half. We use the Bayesian rule to infer the predicted labels given the workers' answer. Here we consider the binary classification task. Let $\y_j \in \{1,0\}$ be the true label of the $j$-th task and $\hat{\y}_j$ be the prediction given labels by selected workers, i.e.,
$$
\hat{\y}_j=
\begin{cases}
0, &\text{ if } \P(\y_j=1 | \hat{Y}_{\Omega_j,j}) < \P(\y_j=0 | \hat{Y}_{\Omega_j,j});\\
1, &\text{ otherwise }
\end{cases}
$$
where $\hat{Y}_{ij}$ is the $i$-th worker's predication {\rc on} $j$-th task. Set $\Omega_j$ contains the indices of the selected workers for $j$-th task, i.e. $X_{ij}=1, \forall i \in \Omega_j$, and $X_{i'j}=0, \forall i' \notin \Omega_j$

Then $\mathcal{E}_{\text{acc}}(Q_{\cdot,j}, X_{\cdot,j})$ can be defined in the following:
\begin{equation*}
\mathcal{E}_{\text{acc}}(Q_{\cdot,j}, X_{\cdot,j})=\P(\hat{\y}_j=1, \y_j=1 )+ \P(\hat{\y}_j=0 ,\y_j=0 )
\end{equation*}
By this way, the expected accuracy will not be continuous, so we use smooth function to approximate the expected accuracy and adopt the stochastic gradient with the proposed projection operator to optimize it. Due to the space limitation, the detailed derivation of the objective formulation can be found in the supplemental material.

We conduct experiment for crowdsourcing task assignment on synthetic data. Specifically, we generate the quality matrix $Q$ from uniformly random distribution with interval $[0.5, 0.9]$. The prior probability $\P(\y_j=1)$ and $\P(\y_j=0)$ are set as $0.5$ for all the tasks.

To avoid evaluating the expectation term, we apply the stochastic iterative hard thresholding framework \citep{nguyen2014linear}. Each iteration we get $\hat{Y} | \y_j=1$ and $\hat{Y} | \y_j=0$ by sampling based on $Q$ i.e. $\P(\hat{Y}_{ij}=1|\y_j=1)=Q_{ij}$, $\P(\hat{Y}_{ij}=0|\y_j=0)=Q_{ij}$. Then we can get a stochastic gradient based on the sampled $\hat{Y}$.

\begin{figure}[htbp!]
	\begin{minipage}[b]{0.49\linewidth}
		\centering
		\centerline{\includegraphics[width=1\linewidth]{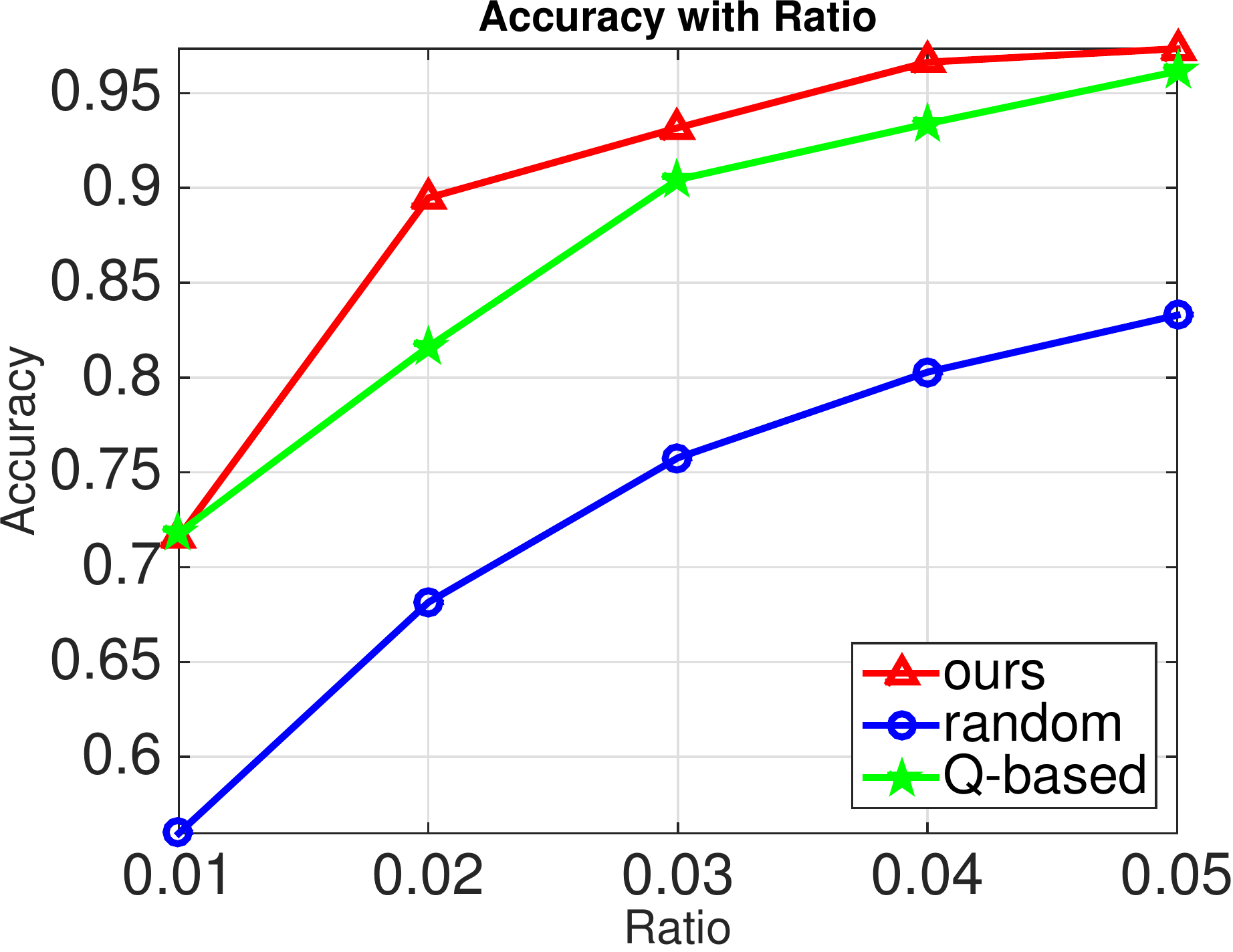}}
		\centerline{\scriptsize{(a) $n=100$, $m=1,000$. }}\medskip
	\end{minipage}
	\begin{minipage}[b]{0.49\linewidth}
		\centering
		\centerline{\includegraphics[width=1\linewidth]{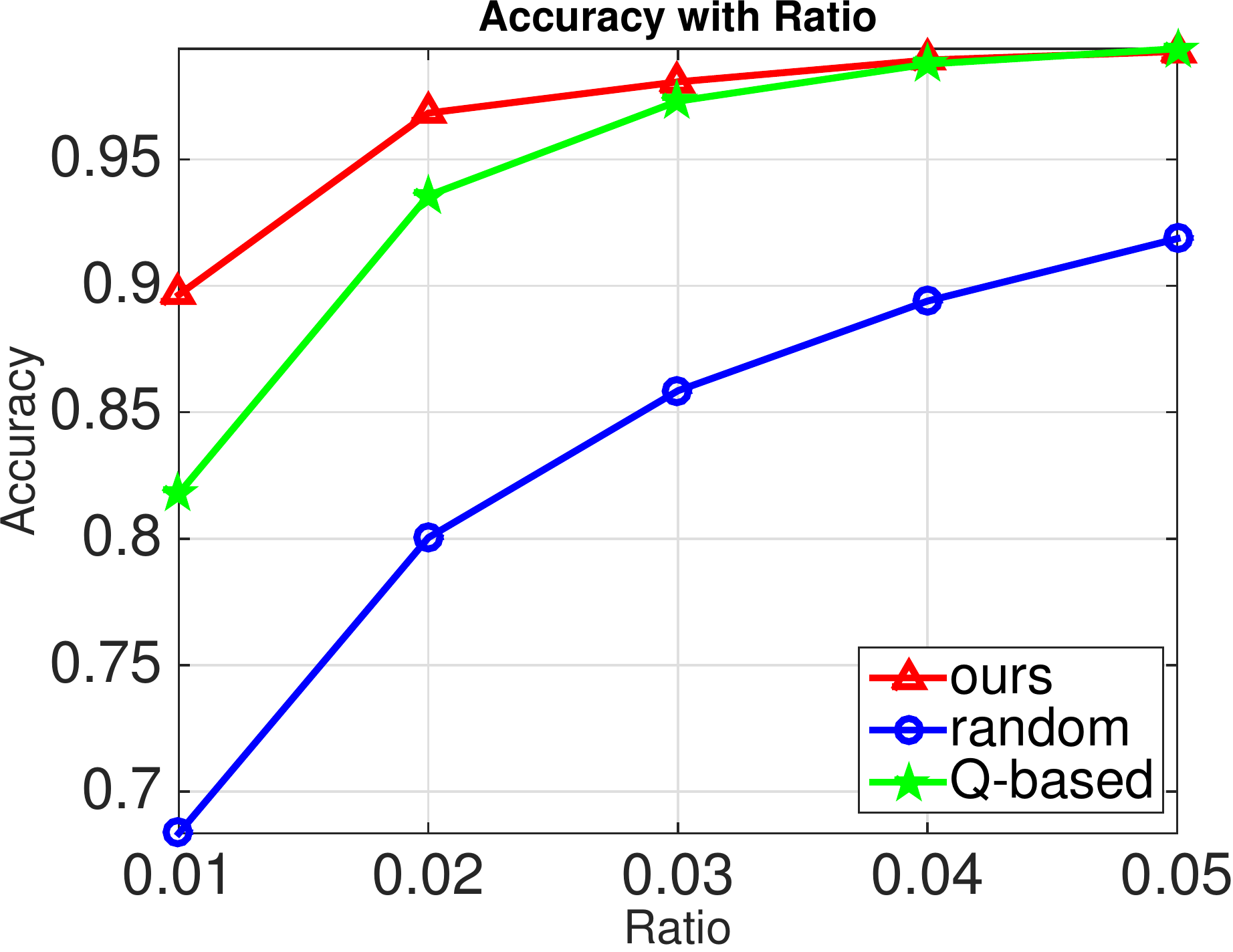}}
		\centerline{\scriptsize{(b) $n=200$, $m=10,000$.}}\medskip
	\end{minipage}
	\caption{Expected accuracy of crowdsourced classification.}
	\label{fig:crowds}
\end{figure}

Besides the proposed formulation~\eqref{eq:crowd}, we evaluate the random assignment algorithm and the Q-based linear {\rrc programming} \citep{ho2013adaptive}. The random assignment algorithm widely used in practice is the most straightforward approach: given the total assignment budget $\s^{\text{total}}$ and the restrictions ($\s^{\text{worker}}$ and $\s^{\text{task}}$) for workers and tasks, randomly assign tasks to the workers. The Q-based linear programming uses the linear combination of $Q_{ij}$ over $i$ to evaluate the overall accuracy on task $j$ for simpler formulation. In addition, it does not consider the restriction on tasks, thus it may assign lots of workers to a difficult task\footnote{A ``difficult'' task means that all workers' qualities are low on this task.}. To make a fair comparison, the task restriction is added into this method. To get the assignment result which satisfies the task and worker restriction, we use our projection operator in the other methods too.

We evaluate the experiments on different value of $\s^{\text{task}}, \s^{\text{worker}}$ by setting them as different ratios of the total number of tasks and workers. The overall sparsity is set by the same way as in Section~\ref{sec:linearexp}. To measure the performance, we compare the sampled expected accuracy. The samples (i.e., $\hat{Y}$) are independent to the samples used in training. Figure~\ref{fig:crowds} shows the comparison of the expected accuracy of the three approaches. We can observe that the accuracy increases with larger ratio (i.e. more assignments). The random assignment strategy needs more assignments to get the same accuracy compared with the other two methods.

\subsection{Application in Identification of Gene Regulatory Networks}
\label{sec:gene}
In this section, we apply the projection operator to the identification of gene regulatory networks (GRN).
\paragraph{Background.}
Gene regulatory network represents the relations between different genes, which plays important roles in biological processes and activities by controlling the expression level of RNAs. There is a well-known biological competition named DREAM challenge about {\tt identifying} GRN. Based on the time series gene expression data which are RNAs' level along time sequence, contestants are required to recover the whole gene network of given size. One popular way to infer GRN is to utilize the sparse property of GRN: e.g., one gene in the network is only related to a small number of genes and we already know that there exists no relationship between some genes. Therefore, the amount of edges connecting to one vertex is far less than the dimension of the graph. It is a practical case  of row-wise and column-wise sparsity for matrix. We could apply the projection operator to constrain the number of edges related to each vertex to identify the whole network. Recently,
the dynamic Bayesian network (DBN) \citep{zou2005new} is supposed to be an effective model to recover GRNs. The RNAs' level of all genes in GRN at time $t$ is stored in gene expression vector $\textbf{x}_t \in \mathbb{R}^{N}$, where each entry corresponds to one gene respectively and $N$ is the number of genes in GRN. Hence, We define the total amount of time points in the experiment as $T$. Gene activity model {\rrc is usually assumed to be}
\[
\x_{t+1} = P\x_t + \e_t, \quad t = 1 \ldots T-1,
\]
where $P\in \mathbb{R}^{N\times N}$ is the covariance matrix of GRN and $\e_t\in \mathbb{R}^{N}$ is Gaussian white noise. 
Then the difference of RNA levels between time points {\rrc $t+1$ and $t$, i.e. $\y_{t+1,t} \in \mathbb{R}^{N}$ is defined as follows:}
\[
\y_{t+1,t} := \x_{t+1}-\x_t = \bar{\W} \x_t + \e_t, \quad t = 1 \ldots T-1,
\]
where $\bar{\W} = P - I$ is the true sparse $N $-by-$ N$ matrix. Therefore, the GRN is only considered between different genes and we eliminate edges whose start and end vertex are the same. We define that $Y:= [\y_{2,1},\ldots,\y_{T,T-1}]\in \R^{N\times (T-1)}$ and $X:= [\x_{1},\ldots,\x_{T-1}] \in \R^{N\times (T-1)}$. The objective function is 
\[
f(\W) = {1 \over 2}\|Y-\bar{\W} X\|_{\mathcal{F}}^{2} =  {1 \over 2}\sum_{t =1}^{T-1}\|(\x_{t+1}-\x_t) - \bar{\W}\x_t\|^2.
\]

\paragraph{Time-course Gene Expression Data.}
To evaluate our method, we employ GeneNetWeaver \citep{marbach2009generating, schaffter2011genenetweaver}, the official DREAM Challenge tool for time-series expression data generation. With typical gene network structure and ordinary differential equation (ODE) models, GeneNetWeaver will produce the time-course gene expression data at pre-specified time points. In the simulation studies, we control the size of gene network to be $N= 30$ vertexes and the gene expression data are generated under 10\% Gaussian white noise.

The network is shown in Figure~\ref{fig:GeneNet30}. In this Figure, it is clear that one gene only has a few connections to other genes. Therefore, the GRN is sparse and we are able to restrict the in-degree and out-degree of every vertex by representing the network as a matrix and controlling the sparsity within each row and column.

\begin{figure}[htbp]
\centering
\includegraphics[width = 0.45\linewidth]{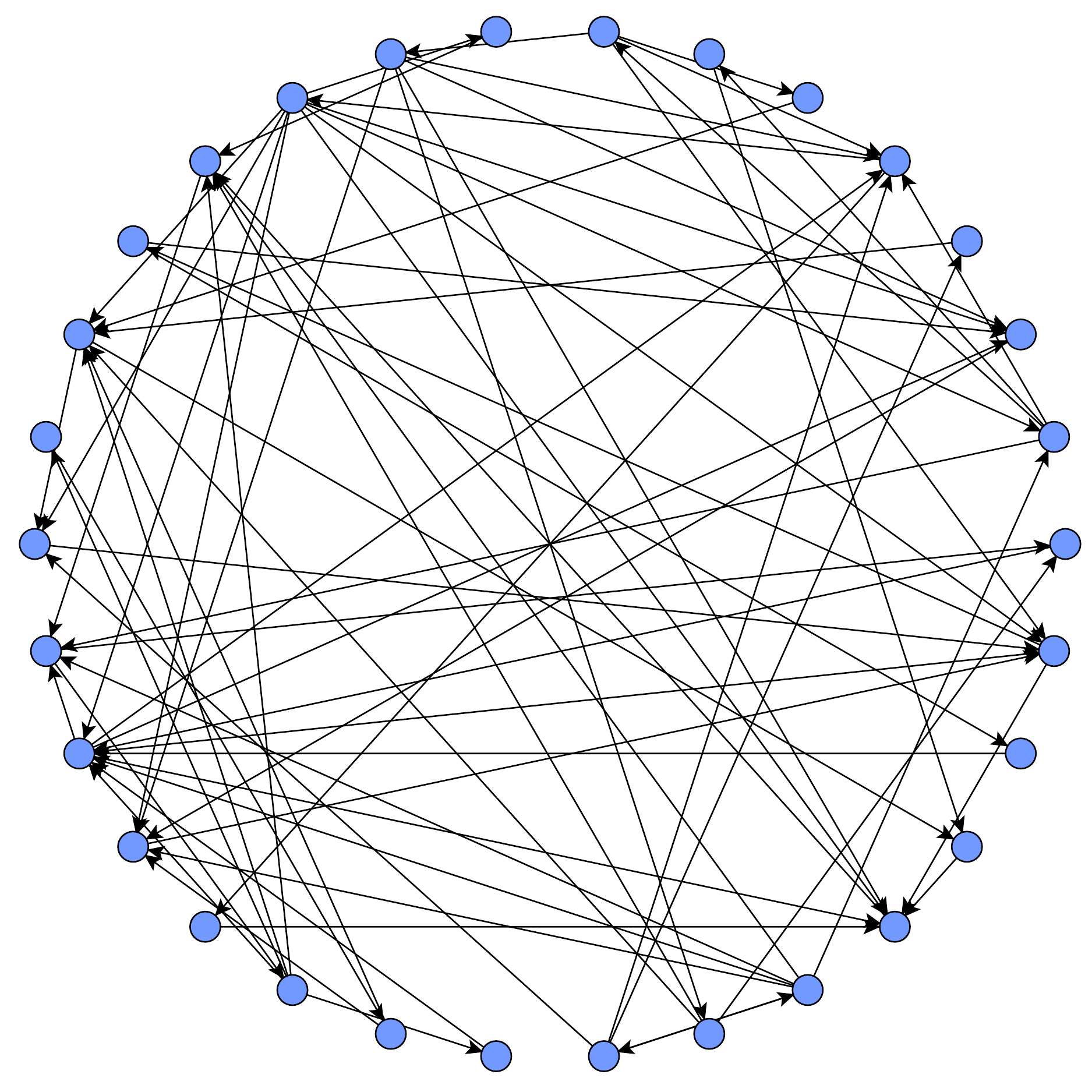}\\
\caption{This gene regulatory network contains 30 vertexes which are represented by blue circles. One edge starts at gene 1 and ends at gene 2 if gene 1 has influence on gene 2.}\label{fig:GeneNet30}
\end{figure}

\paragraph{Performance evaluation.}
Six commonly-used criteria are considered to measure the performance, i.e., sensitivity (\text{SN}), specificity (\text{SP}), accuracy (\text{ACC}), \text{F-measure}, Matthews correlation coefficient (\text{MCC}), and the Area Under ROC Curve (\text{AUC}):
\begin{align*}
	&\text{SN} = \frac{\text{TP}}{\text{TP}+\text{FN}},\quad
	\text{SP} = \frac{\text{TN}}{\text{TN} + \text{FP}},\\
	&\text{ACC} = \frac{\text{TP}+\text{TN}}{\text{TP}+\text{FP}+\text{TN}+\text{FN}},
	\\ 
	&\text{F-measure} = \frac{2\times \text{SN} \times \text{SP}}{\text{SN}+\text{SP}},\\
	&\text{MCC} =\frac{\text{TP}\times \text{TN} - \text{FP}\times \text{FN}}{\sqrt{(\text{TP}+\text{FP})(\text{TP}+\text{FN})(\text{TN}+\text{FP})(\text{TN}+\text{FN})}},
\end{align*}
where TP and TN denote the true positive and true negative, and FP and FN denote the false positive and false negative, respectively. With these criteria, we compare the performance of our method with six representative algorithms, including PCC, ARACNE \citep{margolin2006aracne}, CLR \citep{faith2007large}, MINET \citep{meyer2008minet}, GENIE3 \citep{irrthum2010inferring}, TIGRESS \citep{haury2012tigress}. The results are summarized in Table \ref{tab:performance_proj_method}. Our method outperforms other six state-of-art methods:  the AUC of our method achieve 0.7 higher which is far more than other methods; 5 out of 6 different measure show that our method has significant advantage compared to other algorithms.
\begin{table*}[tbp]
\centering
\begin{scriptsize}
\begin{tabular}{cccccccc}
\hline
&\multicolumn{1}{c}{SN}         & \multicolumn{1}{c}{SP}                & \multicolumn{1}{c}{ACC}               & \multicolumn{1}{c}{F-measure}         & \multicolumn{1}{c}{MCC}               & \multicolumn{1}{c}{AUC}                                  \\ \hline
\textbf{Our Method} & \textbf{0.6875$\pm$0.0295} & 0.7397$\pm$0.0319          & \textbf{0.7119$\pm$0.0305} & \textbf{0.7126$\pm$0.0306} & \textbf{0.4264$\pm$0.0611} & \textbf{0.7136$\pm$0.0306} \\
GENIE3              & 0.5611$\pm$0.0277          & 0.4984$\pm$0.0547          & 0.5319$\pm$0.0244          & 0.5279$\pm$0.0277          & 0.0595$\pm$0.0547          & 0.5662$\pm$0.0244          \\
CLR                 & 0.5167$\pm$0.0583          & 0.4476$\pm$0.1147          & 0.4844$\pm$0.0575          & 0.4795$\pm$0.0583          & -0.0357$\pm$0.1147         & 0.5210$\pm$0.0575          \\
TIGRESS             & 0.1333$\pm$0.0541          & 0.8302$\pm$0.0367          & 0.4585$\pm$0.0374          & 0.2258$\pm$0.0817          & -0.0552$\pm$0.1061         & 0.5567$\pm$0.0358          \\
PCC                 & 0.5042$\pm$0.0124          & 0.4333$\pm$0.0245          & 0.4711$\pm$0.0101          & 0.4661$\pm$0.0124          & -0.0625$\pm$0.0245         & 0.5091$\pm$0.0101          \\
ARACNE              & 0.1167$\pm$0.0519          & \textbf{0.9127$\pm$0.0579} & 0.4881$\pm$0.0197          & 0.2051$\pm$0.0519          & 0.0479$\pm$0.0579          & 0.5808$\pm$0.0197          \\
MINET               & 0.5764$\pm$0.0425          & 0.5381$\pm$0.0888          & 0.5585$\pm$0.0458          & 0.5547$\pm$0.0425          & 0.1147$\pm$0.0888          & 0.5910$\pm$0.0458          \\ \hline
\end{tabular}
\end{scriptsize}
\caption{Performance evaluation of our method and six other state-of-art methods}
\label{tab:performance_proj_method}
\end{table*}

\section{Conclusion}
This paper considers the TVCS constrained optimization, motivated by the intrinsic restrictions for many important applications, for example, in bioinformatics, recommendation system, and crowdsourcing. To solve the cardinality constrained problem, the key step is the projection onto the cardinality constraints. Although the projection onto the overlapped cardinality constraints is NP-hard in general, we prove that if the TVCS condition is satisfied the projection can be reduced to a linear programming. We further prove that there is an iterative algorithm which finds an integer solution to the linear programming within time complexity $O((p+|\mathcal{G}|)\log_\alpha{1\over R})$, where $R$ is the distance from the initial point to the optimization solution and $\alpha < 1$ is the convergence rate. We finally use synthetic experiments and two interesting applications in bioinformatics and crowdsourcing to validate the proposed TVCS model.

\section*{Acknowledgements}
This project is supported in part by the NSF grant CNS-1548078 and the NEC fellowship.
\newpage
{
\bibliographystyle{abbrvnat}
\bibliography{reference}
}

\newpage

\begin{center}
{\Large Supplementary Material}
\end{center}
\setcounter{section}{0}
In the supplementary materials, we include all the proofs for the proposed theorems and the detailed derivations for the formulation of the crowdsourcing task assignment problem.
\section{Proof of Lemma~\ref{thm:proj_ip}}
Firstly we show how to convert the projection problem~\eqref{eq:proj_obj} to a support set selection problem. For any vector $\w$, let vector $\x \in \{0,1\}^p$ indicate the nonzero positions of $\w$, then we can claim that
\begin{equation*}
\|\v - \v_{\x}\|^2 \leq \| \v - \w \|^2
\end{equation*}
where $\v_{\x}$ is a vector having same dimension with $\v$, and it keeps elements at positions where $\x$ has ``1'', and fills zeros at positions where $\x$ has ``0''.
In addition, vector $\w \in \Omega(\mathcal{G}, \s)$ if and only if its support set indicator vector $\x$ satisfies $A\x \leq \s$, given $A$ is defined in~\eqref{eq:A_def}.

So the problem~\eqref{eq:proj_obj} can be converted to integer programming:
\begin{align}
\min_{\x \in \{0,1\}^p} \quad& \|\v - \v_{\x}\|^2 \label{eq:ip_dist} \\
\text{subject to} \quad& A\x \leq \s \notag
\end{align}
and the objective can be further simplified:
\begin{align*}
\|\v - \v_{\x}\|^2 &= \langle \v^2, \1 - \x \rangle \\
& = \|\v\|^2 - \langle \v^2, \x \rangle
\end{align*}
Since $\v$ is constant here, then the problem~\eqref{eq:ip_dist} is equivalent to the ILP~\eqref{eq:proj_ilp}, which means problem~\eqref{eq:proj_obj} is equivalent to ILP~\eqref{eq:proj_ilp}. Then we complete the proof.

\section{Proof of Theorem~\ref{thm:iplp}}
To prove Theorem~\ref{thm:iplp}, we use the concept of totally unimodular matrix.

\begin{definition}\label{def:tu}
{\bf Totally Unimodular (TU) Matrix.} An integer matrix is {\rc TU}, if the determinant of any square submatrices\footnote{Submatrix here is a square smaller matrix obtained by removing certain rows and columns} is in the set $\{-1, 0, 1\}$. 
\end{definition}

{\rc
\begin{proposition}\label{prop:idtu}
If $A$ is TU, then $A^\top$ is TU, and their concatenations with identity matrices (i.e. $[ A, I ], [A^\top, I]^\top$) are still TU.
\end{proposition}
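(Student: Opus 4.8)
The plan is to establish the three assertions in sequence, with the transpose claim doing double duty at the end. First I would dispatch the transpose claim directly from the determinant identity $\det(M^\top)=\det(M)$: every square submatrix of $A^\top$ is precisely the transpose of a square submatrix of $A$ (selecting rows of $A^\top$ corresponds to selecting columns of $A$, and vice versa), so its determinant lies in $\{-1,0,1\}$ whenever $A$ is TU. Hence $A^\top$ is TU.

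Next, the core of the argument is showing that $[A,I]$ is TU, which I would prove by induction on the number $k$ of columns that a chosen square submatrix $B$ draws from the identity block. For the base case $k=0$, the matrix $B$ is a square submatrix of $A$ and the conclusion is immediate from the hypothesis. For the inductive step, pick a column of $B$ coming from $I$; restricted to the rows selected for $B$, this column is either identically zero --- in which case $\det(B)=0$ --- or it is a standard basis vector carrying a single entry equal to $1$. In the latter case I would perform a cofactor (Laplace) expansion along that column, obtaining $\det(B)=\pm\det(B')$, where $B'$ is the matrix produced by deleting both the chosen column and the row holding the $1$. Since $B'$ is again a square submatrix of $[A,I]$ that uses only $k-1$ identity columns, the induction hypothesis yields $\det(B')\in\{-1,0,1\}$, and therefore $\det(B)\in\{-1,0,1\}$ as well.

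Finally, the vertical concatenation $[A^\top,I]^\top$ follows by chaining the first two results: $A$ being TU implies $A^\top$ is TU (transpose claim), whence $[A^\top,I]$ is TU (the concatenation claim applied with $A^\top$ in place of $A$), and one further application of the transpose claim gives that $([A^\top,I])^\top=[A^\top,I]^\top$ is TU.

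I expect the only delicate point to be the bookkeeping inside the inductive step: verifying that after the cofactor expansion the reduced matrix $B'$ is genuinely a square submatrix of $[A,I]$ with exactly one fewer identity column, so that the induction is well-founded, and confirming that the degenerate case in which the identity column restricts to the zero vector is correctly absorbed by $\det(B)=0$. Everything else reduces to standard properties of determinants.
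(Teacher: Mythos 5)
Your proposal is correct and takes essentially the same approach as the paper: the transpose claim via $\det(M^\top)=\det(M)$, and an induction that exploits the fact that an identity column restricted to the chosen rows is either zero or a standard basis vector, reducing to a smaller submatrix. The only differences are cosmetic --- the paper inducts on the submatrix size and phrases the reduction as row/column elimination where you use a cofactor expansion, and your write-up explicitly handles the all-zero restricted column case (giving $\det(B)=0$) that the paper's argument glosses over.
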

\begin{proof}
Since transposing matrix will not change the determinant, so it is obvious $A^\top$ is TU.\\
Then we prove stacking with identity matrix $I$ preserves TU property. We prove it by induction. Firstly, we show that submatrix with size 1 always has determinant in $\{-1, 0, 1\}$, because any element from $I$ is either 1 or 0. Now consider a submatrix with size $k$ having determinant in $\{-1, 0, 1\}$, then submatrix with size $k+1$ will still have determinant in $\{-1, 0, 1\}$. To show this, we only need to prove that adding a new row/column from $I$ will not change the determinant out of set $\{-1, 0, 1\}$. Since any row/column from $I$ only has one nonzero element ``1'', we can eliminate other elements in the same position by subtracting a multiple of this row/column to other rows/columns. After that, we can remove this row and column, and the determinant can only change its sign. So we know that submatrix with size $k+1$ has determinant in $\{-1, 0, 1\}$ if submatrix with size $k$ has determinant in $\{-1, 0, 1\}$.
\end{proof}
}

\begin{lemma}\label{lem:tuint}
If $A$ is {\rc TU}, $\s$ is an integer vector, then {\rc all} vertices of the following polytope are integer points:
\begin{equation}
\{\x \mid A\x\leq \s, \x \in [0, 1]^p\} \label{eq:tulp}
\end{equation}
\end{lemma}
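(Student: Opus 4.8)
The plan is to reduce the statement to the classical fact that a polyhedron defined by a totally unimodular constraint matrix and an integer right-hand side has only integral vertices, and then to prove that fact directly via Cramer's rule. First I would rewrite the box constraint $\x\in[0,1]^p$ as the two systems of linear inequalities $I\x\leq\1$ and $-I\x\leq\0$, so that the whole feasible set in~\eqref{eq:tulp} becomes $\{\x \mid M\x\leq b\}$ with
$$M = \begin{bmatrix} A \\ I \\ -I \end{bmatrix}, \qquad b = \begin{bmatrix} \s \\ \1 \\ \0 \end{bmatrix}.$$
By Proposition~\ref{prop:idtu} the matrix $[A^\top\ I]^\top$ is TU; appending the rows $-I$ does not destroy this, since negating a row only flips the sign of every determinant that uses it and hence keeps all square-submatrix determinants in $\{-1,0,1\}$. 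Thus $M$ is TU, and $b$ is an integer vector because $\s$, $\1$, $\0$ are all integral.

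Next I would invoke the standard characterization of a vertex of a polyhedron: $\x^*$ is a vertex of $\{\x \mid M\x\leq b\}$ if and only if the rows of $M$ whose inequalities are tight at $\x^*$ contain $p$ linearly independent rows. Collecting $p$ such rows into a nonsingular $p\times p$ submatrix $M'$ with corresponding right-hand side $b'$, the vertex is the unique solution of $M'\x^*=b'$, i.e. $\x^*=(M')^{-1}b'$. Because $M'$ is a square submatrix of the TU matrix $M$ and is nonsingular, its determinant lies in $\{-1,0,1\}\setminus\{0\}$, so $\det(M')=\pm1$.

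The integrality then follows from Cramer's rule: each coordinate satisfies $x^*_j=\det(M'_j)/\det(M')$, where $M'_j$ is $M'$ with its $j$-th column replaced by $b'$. Since $M'$ is an integer matrix and $b'$ an integer vector, $M'_j$ is integral and $\det(M'_j)\in\mathbb{Z}$; combined with $\det(M')=\pm1$ this gives $x^*_j\in\mathbb{Z}$. As $0\leq x^*_j\leq1$, in fact $x^*_j\in\{0,1\}$, so every vertex is an integer point, which proves the lemma.

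I expect the main obstacle to be making the vertex-to-nonsingular-submatrix correspondence fully rigorous — in particular, arguing that the constraints active at a vertex genuinely furnish a full-rank $p\times p$ submatrix of $M$, rather than merely $p$ tight inequalities that could be linearly dependent. Everything after that is a mechanical application of the TU property through Cramer's rule, and the only subtlety worth double-checking is that stacking $-I$ onto $\begin{bmatrix}A\\ I\end{bmatrix}$ preserves total unimodularity, which Proposition~\ref{prop:idtu} does not state verbatim but which follows at once from the sign-invariance of determinants under row negation (or, equivalently, by rerunning its inductive elimination argument on rows with a single nonzero entry $-1$).
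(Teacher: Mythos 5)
Your proof is correct, and it takes a somewhat more self-contained route than the paper. The paper rewrites the polytope in the form $\{\x \mid [A^\top\ I]^\top\x \leq [\s^\top\ \1^\top]^\top,\ \x \geq \0\}$, applies Proposition~\ref{prop:idtu}, and then cites Theorem 13.2 of \citet{papadimitriou1982combinatorial} as a black box to conclude integrality of the vertices. You instead fold \emph{all} constraints, including nonnegativity, into a single system $M\x\leq b$ with $M=[A^\top\ I\ {-I}]^\top$, and then prove the classical integrality theorem from scratch: vertices correspond to nonsingular $p\times p$ active-constraint submatrices, total unimodularity forces $\det(M')=\pm 1$, and Cramer's rule yields integrality. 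This costs you two extra (easy) steps the paper avoids: (i) showing that appending $-I$ preserves the TU property --- your sign-flip argument is essentially right, though strictly one should note the case where a row of $I$ and its negation both appear in a submatrix, which immediately gives determinant $0$; and (ii) the vertex-to-basic-solution correspondence you flag yourself, which is standard and unproblematic here since the polytope is bounded. What your version buys is transparency: the reader sees exactly where integrality comes from ($\det = \pm 1$ plus Cramer), rather than trusting a textbook citation; what the paper's version buys is brevity and a reduction to a form where the nonnegativity constraints are handled implicitly by the cited theorem, so no claim about $[A^\top\ I\ {-I}]^\top$ being TU is ever needed.
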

\begin{proof}
We can have an equivalent form of this polytope:
\begin{equation}
 \left \{ \x \middle| \begin{bmatrix} A  \\ I \end{bmatrix}\x \leq \begin{bmatrix} \s \\ \1 \end{bmatrix}, \x \geq \0 \right \}
\end{equation}
From Proposition~\ref{prop:idtu}, we know that matrix 
$$\begin{bmatrix} A  \\ I \end{bmatrix}$$
is TU if $A$ is TU, so this meets the case in Theorem 13.2~\citep[see][chap 13]{papadimitriou1982combinatorial}. Then we complete the proof.
\end{proof}

\begin{lemma}\label{lem:tu2}
If $C$ is the matrix whose each row is the indicator vector of a group $g\in \mathcal{G}_1 \cup \mathcal{G}_2$ in our TVCS model, then $C$ is a TU matrix.
\end{lemma}
\begin{proof}
Since $C$'s rows are the indicator vectors of groups in $\mathcal{G}$, so $C_{ij}\in\{0,1\}$. From the definition~\ref{def:overlap}, we know that there are at most two ``1''s in each column. For the column which has two ``1''s, the two corresponding groups are from $\mathcal{G}_1$ and $\mathcal{G}_2$ respectively. (Because we know that groups within $\mathcal{G}_1$ or $\mathcal{G}_2$ do not overlap.)

By this way, our matrix $C$ meets the case in Theorem 13.3~\citep[see][chap 13]{papadimitriou1982combinatorial}, and it is a TU matrix.
\end{proof}

\begin{lemma}\label{lem:tu}
{\rc Recall the matrix $A$ in equation~\eqref{eq:A_def}:
\begin{equation*}
A=
\begin{bmatrix}
\1^\top\\
C
\end{bmatrix}
\end{equation*}
If it is constructed from the TVCS model, then $A$ is a TU matrix.}
\end{lemma}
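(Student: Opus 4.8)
The plan is to prove total unimodularity of $A$ directly through the Ghouila-Houri row-signing criterion, rather than trying to push Lemma~\ref{lem:tu2} through Proposition~\ref{prop:idtu}. The reason the earlier machinery does not suffice is that prepending the all-ones row $\1^\top$ (the $\mathcal{G}_0$ constraint) destroys the ``at most two nonzeros per column'' property that let us invoke the relevant criterion of \citet{papadimitriou1982combinatorial} for $C$: a single column of $A$ can now carry three $1$'s at once, one from $\mathcal{G}_0$, one from the unique covering group in $\mathcal{G}_1$, and one from the unique covering group in $\mathcal{G}_2$. So the first step is to replace that special criterion by the general characterization: an integer matrix $M$ is TU if and only if every subset $R$ of its rows can be split as $R = R_1 \cup R_2$ with $\bigl|\sum_{i\in R_1} M_{ij} - \sum_{i\in R_2} M_{ij}\bigr| \le 1$ for every column $j$.

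Next I would exhibit such a splitting for an arbitrary row subset $R$ of $A$, exploiting the TVCS disjointness from Definition~\ref{def:overlap}. Because the groups inside $\mathcal{G}_1$ are pairwise disjoint, and likewise those inside $\mathcal{G}_2$, every column $j$ is met by at most one row of $R$ originating in $\mathcal{G}_1$ and at most one originating in $\mathcal{G}_2$; the only other possible contributor is the single row $\1^\top$. I then case on whether $\1^\top \in R$. If $\1^\top \notin R$, assign sign $+1$ to every $\mathcal{G}_1$ row of $R$ and $-1$ to every $\mathcal{G}_2$ row; each column sum is then $+1$, $-1$, $0$, or $0$ according to whether the column is covered by a $\mathcal{G}_1$ row only, a $\mathcal{G}_2$ row only, both, or neither. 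If $\1^\top \in R$, assign $+1$ to $\1^\top$ and $-1$ to all remaining rows of $R$; a column then receives $+1$ from $\1^\top$ and $-1$ from each of its at-most-one $\mathcal{G}_1$ and at-most-one $\mathcal{G}_2$ covering rows, so the signed column sum lies in $\{+1,0,-1\}$, the extreme case being $1-1-1=-1$ when all three group types cover the column. In either case the Ghouila-Houri condition holds column by column, which certifies that $A$ is TU.

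The crux of the argument, and the step I expect to require the most care, is the verification in the triply-covered column, since that is exactly where the naive reduction breaks: one must confirm that the disjointness of $\mathcal{G}_1$ and of $\mathcal{G}_2$ caps the number of $-1$ contributions at two, keeping $|1-1-1|=1$ within range. An alternative route is a direct cofactor induction on the order of a square submatrix $B$ of $A$, in the spirit of the proof of Proposition~\ref{prop:idtu}: columns of $B$ with at most one nonzero are cleared by Laplace expansion plus the inductive hypothesis, and any $B$ avoiding the $\1^\top$ row reduces to the already-established TU-ness of $C$ via Lemma~\ref{lem:tu2}. The residual case, where $B$ contains $\1^\top$ and every column is multiply covered, is the delicate one and is precisely what the signing argument dispatches cleanly, so I would favor the Ghouila-Houri formulation over the determinant bookkeeping that the induction otherwise forces.
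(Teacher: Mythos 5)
Your proof is correct, but it takes a genuinely different route from the paper. The paper first shows $C$ is TU via the ``at most two ones per column, with the two ones in rows from different parts'' criterion (Lemma~\ref{lem:tu2}, citing Theorem 13.3 of \citet{papadimitriou1982combinatorial}), and then handles the full matrix $A$ by a direct determinant case analysis: it takes an arbitrary square submatrix containing the row $\1^\top$ and, through column exchanges, row summation, and row subtraction, recursively reduces it to a submatrix of $C$, a rank-deficient matrix, or a smaller instance of the same problem, checking at each step that the determinant stays in $\{-1,0,1\}$. You instead invoke the Ghouila-Houri signing characterization and exhibit an explicit equitable signing for every row subset $R$: when $\1^\top \notin R$, sign $\mathcal{G}_1$ rows $+1$ and $\mathcal{G}_2$ rows $-1$; when $\1^\top \in R$, sign $\1^\top$ with $+1$ and everything else $-1$. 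Both cases are verified correctly, and the disjointness within $\mathcal{G}_1$ and within $\mathcal{G}_2$ is exactly what caps each column's contribution, including the triply-covered column where the sum is $1-1-1=-1$. Your argument buys brevity and robustness --- there is no recursive determinant bookkeeping, which in the paper's proof runs over several nested cases and is the error-prone part --- and it actually subsumes Lemma~\ref{lem:tu2} as a special case, since your $\1^\top \notin R$ signing is precisely the Ghouila-Houri certificate that $C$ alone is TU. It also makes visible why the ``three-view'' structure is the natural boundary: a fourth covering row (e.g.\ a third view of disjoint groups) would force a column sum of magnitude $2$ under any signing of this type, consistent with the fact that such matrices need not be TU. What the paper's approach buys is self-containedness relative to its chosen reference: it needs only the determinant definition of TU and the two textbook theorems it already cites, whereas your proof requires importing the Ghouila-Houri theorem, which is standard but not stated in \citet{papadimitriou1982combinatorial}'s chapter 13 toolkit that the authors lean on. One cosmetic note: your opening remark that the paper pushes Lemma~\ref{lem:tu2} through Proposition~\ref{prop:idtu} is not what the paper does (that proposition is only used, inside Lemma~\ref{lem:tuint}, to stack $A$ with $I$); this does not affect your argument.
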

\begin{proof}
From Lemma~\ref{lem:tu2}, we know $C$ is a TU matrix for any $\mathcal{G}$ of our TVCS model. {\rc In other words, any submatrix restricted in $C$ has determinant $-1$, $0$, or $1$. Therefore, we only need to consider the submatrix of $A'$ has overlaps with the first row $\1^\top$. There are only three possible forms of such submatrix $S$. We will show all of their determinants are in $\{-1,0,1\}$.} 
\begin{enumerate}
\item[1)] At least one column of $S$ has a single ``$1$'', so it must appear in the first row $\1^\top$. By exchanging such column with the last column (which can only influence the sign of determinant), we can transform it with form:
\[
\left[
\begin{matrix}
\1^\top & 1\\
\bar{C} & \0
\end{matrix}
\right]
\]
where $\bar{C}$ is any submatrix of $C$. From the matrix determinant property, we have $|S|=\pm|\bar{C}| \in \{-1,0,1\}$. Therefore, submatrix $S$ in such form have determinants in $\{-1,0,1\}$.
\item[2)] {\rc All columns of $S$ have three ``$1$'' elements (the last row has ``1'' for every column). For the rows which are from $C$, we can sum all the rows to a certain row (this will not  change the determinant). By this way we transform $S$ to the following form:}
\[
\left[
\begin{matrix}
\1^\top\\
{\bf 2}^\top\\
\bar{C}
\end{matrix}
\right]
\]
where $\bar{C}$ is a submatrix of $C$. In this case, $S$ is not full rank, so its determinant is 0.
\item[3)] Each column in $S$ contains at least two ``$1$'' elements, and there exists one column which has exactly two ``$1$''s. By exchanging it to the last column, we can transform it to be:
\[
\left[
\begin{matrix}
\1^\top & 1
\\
\cdots & \0
\\
\bar{C}_i & 1
\\
\cdots & \0
\end{matrix}
\right]
\]

This means that one ``1'' is in the first row, and another is in the row from $C$, let us say it's $\bar{C}_i$. Since subtracting one row from another row will not change the determinant, we can let the first row subtract $\bar{C}_i$:
\[
\left[
\begin{matrix}
\1^\top & 1
\\
\cdots & \0
\\
\bar{C}_i & 1
\\
\cdots & \0
\end{matrix}
\right]
\rightarrow
\left[
\begin{matrix}
\1^\top-\bar{C}_i & 0
\\
\cdots & \0
\\
\bar{C}_i & 1
\\
\cdots & \0
\end{matrix}
\right]
\]
Now the last column only has a single ``1'' in the $i$-th row. We can generate a smaller matrix $S'$ by removing the $i$-th row and the last column, and if $S'$ has determinant in $\{-1, 0, 1\}$, so does $S$.

\[
\left[
\begin{matrix}
\1^\top-\bar{C}_i & 0
\\
\cdots & \0
\\
\bar{C}_i & 1
\\
\cdots & \0
\end{matrix}
\right]
\rightarrow
S'=\left[
\begin{matrix}
\1^\top-\bar{C}_i
\\
\cdots
\\
\cdots
\end{matrix}
\right]
\]

If $\bar{C}_i\neq \0^\top$, then there are some positions (including $j$-th column) in the first row will become zeros. For any column of matrix $S'$ which has ``0'' element in the first row, there are two cases:
\begin{enumerate}
\item This column only contains zeros, i.e. $S'$ has zero determinant.
\item This column contains a single ``1'', we can generate a smaller matrix $S''$ by removing this column and the row where this ``1'' sits. If $S''$ has determinant in $\{-1, 0, 1\}$, so does $S'$.
\end{enumerate}
Notice that it is impossible for the case that such column has two ``1''s. (Since each column can have at most three ``1''s, and we already remove the ``1'' in the first row  by subtraction, and discard another ``1'' by removing $\bar{C}_i$.) In the above case (b), we can repeat removing columns and rows until we get a degenerate matrix (has 0 determinant), or a matrix whose first row does not have zeros. For the later situation, we can process it by same procedures as the original matrix $S$ unless it only has one row and one column, i.e. a matrix having single element ``1'' (has determinant 1).

If $\bar{C}_i=\0^\top$, we can also process it by same procedures as the original matrix $S$.
\end{enumerate}
Therefore, we have proved that any square submatrix $S$ in $A'$ has determinant in $\{-1, 0, 1\}$, which means $A'$ is TU, and hence $A$ is TU.
\end{proof}

Applying Lemma~\ref{lem:tuint} and Lemma~\ref{lem:tu}, we complete the proof of Theorem~\ref{thm:iplp}.

\section{Proof of Theorem~\ref{thm:iteralgo}}
To prove Theorem~\ref{thm:iteralgo}, we start with several lemmas.
\begin{lemma}\label{lem:osc}
Formulate the feasibility problem as problem~\eqref{eq:lpgd}, let $f$ be the objective of the formulation in Theorem~\ref{thm:iteralgo}. If $f^* = 0$, there exists a $\lambda$ such that
\begin{equation*}
f(\z) - f^* \geq \frac{\lambda}{2} \|\z - P_{\z^*}(\z)\|^2, \forall \z \in \Omega
\end{equation*}
where $P_{\z^*}(\z)$ is the optimal point which is closet to $\z$.
\end{lemma}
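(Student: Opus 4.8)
The plan is to read this as an \emph{error bound} (``optimal strong convexity'') inequality and to obtain it directly from Hoffman's bound. First I would observe that $f(\z)=\|[A\z-a]_+\|^2+\|B\z-b\|^2\geq 0$, so the hypothesis $f^*=0$ forces the optimal set to coincide with the zero level set of $f$ intersected with $\Omega$. I would make this set explicit: $f(\z)=0$ holds if and only if $[A\z-a]_+=\0$ and $B\z-b=\0$, i.e. $A\z\leq a$ and $B\z=b$; intersecting with $\Omega=\{\z\mid C\z\leq c\}$ shows that the optimal solution set is the polyhedron
\[
\mathcal{Z}^*=\{\z\mid A\z\leq a,\ B\z=b,\ C\z\leq c\},
\]
which is nonempty precisely because $f^*=0$ is attained, so that $P_{\z^*}(\z)$ is well defined and $\|\z-P_{\z^*}(\z)\|=\operatorname{dist}(\z,\mathcal{Z}^*)$.

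Next I would apply Hoffman's theorem~\citep{hoffman2003approximate} to this polyhedron. The idea is to collect every defining relation into a single linear system $G\z\leq h$, where $G$ stacks the block $A$, the two blocks $B$ and $-B$ (encoding the equality $B\z=b$ as the symmetric pair $B\z\leq b$ and $-B\z\leq -b$), and the block $C$. Hoffman's bound then supplies a constant $\theta>0$ depending only on $G$ — hence only on $A$, $B$, $C$ — such that $\operatorname{dist}(\z,\mathcal{Z}^*)\leq \theta\,\|[G\z-h]_+\|$ for every $\z$.

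Finally I would specialize the bound to $\z\in\Omega$, where the residual collapses onto the $A$- and $B$-blocks: the $C$-block vanishes since $C\z\leq c$, and the two $\pm B$ blocks recombine through the componentwise identity $[t]_+^2+[-t]_+^2=t^2$, giving $\|[G\z-h]_+\|^2=\|[A\z-a]_+\|^2+\|B\z-b\|^2=f(\z)$. Squaring Hoffman's inequality then yields $\|\z-P_{\z^*}(\z)\|^2\leq \theta^2 f(\z)$, so the claim holds with $\lambda=2/\theta^2$ (recall $f^*=0$). I expect the only genuine obstacle to be the invocation of Hoffman's bound itself: one must guarantee that the Hoffman constant is well defined and strictly positive, which rests on $\mathcal{Z}^*$ being a nonempty polyhedron (ensured by $f^*=0$), and one must encode the equality $B\z=b$ as that symmetric pair of inequalities so that the residual norm reproduces the least-squares term $\|B\z-b\|^2$ exactly rather than merely up to a constant. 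Everything else is bookkeeping.
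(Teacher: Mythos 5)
Your proof is correct and takes essentially the same route as the paper's: both apply Hoffman's error bound to the optimal-set polyhedron $\{\z \mid A\z\leq a,\ B\z=b,\ C\z\leq c\}$ (nonempty since $f^*=0$ is attained) and then observe that for $\z\in\Omega$ the $C$-residual vanishes, leaving exactly $f(\z)$ on the right-hand side. The only difference is that you spell out the bookkeeping the paper leaves implicit --- identifying the optimal set with that polyhedron, encoding $B\z=b$ as the pair $\pm B\z\leq \pm b$, and recombining via $[t]_+^2+[-t]_+^2=t^2$ --- which is a welcome clarification, not a different argument.
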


\begin{proof}
Since $f^* = 0$, there exists at least an $\z^*$ such that
\begin{align*}
A\z^* - a \leq 0\\
B\z^* = b\\
C\z^* \leq c
\end{align*}
From Hoffman's Theorem~\citep{hoffman2003approximate}, we know that there exists a $\lambda > 0$, such that
\begin{equation*}
\frac{\lambda}{2} \|\z - P_{\z^*}(\z)\|^2 
\leq \|[A\z-a]_+ \|^2 + \|B\z-b\|^2 + \|[C\z-c]_+ \|^2
\end{equation*}
Therefore, we know for any $\z$ in $\Omega$,
\begin{equation*}
C\z \leq c
\end{equation*}
and
\begin{equation*}
\frac{\lambda}{2} \|\z - P_{\z^*}(\z)\|^2 
\leq \|[A\z-a]_+ \|^2 + \|B\z-b\|^2
\end{equation*}
\end{proof}

Using the lemma above, we can prove the Theorem~\ref{thm:iteralgo} now.

\begin{proof}
Denote by $\Delta_{t}:= \|\z^{t} - P_{\z^*}(\z^{t})\|^2$. We have
\begin{align*}
\Delta_{t+1} &= \|\z^{t+1} - P_{\z^*}(\z^{t+1})\|^2 \\ 
& \leq \|\z^{t+1} - P_{\z^*}(\z^{t})\|^2 \\
& \leq \|\z^{t} - P_{\z^*}(\z^{t}) + \z^{t+1} - \z^t\|^2 \\
& = \|\z^{t} - P_{\z^*}(\z^{t})\|^2 + \|\z^{t+1} - \z_t\|^2 + 2\langle \z^{t+1} - \z^t, \z^t - P_{\z^*}(\z^{t})\rangle\\
& = \Delta_t - \|\z^{t+1} - \z^t\|^2 + 2\langle \z^{t+1} - \z^t, \z^{t+1} - P_{\z^*}(\z^{t})\rangle\\
& \leq \Delta_t - \|\z^{t+1} - \z^t\|^2 - 2\gamma\langle \nabla f(\z^t), \z^{t+1} - P_{\z^*}(\z^{t})\rangle
\end{align*}
Let $T = \langle \nabla f(\z^t), \z^{t+1} - P_{\z^*}(\z^{t})\rangle$. Then we have
\begin{align*}
T 
& = \langle \nabla f(\z^t), \z^{t+1} - P_{\z^*}(\z^{t})\rangle \\
& = \langle \nabla f(\z^t), \z^{t} - P_{\z^*}(\z^t)\rangle + \langle \nabla f(\z^t), \z^{t+1} - \z^t\rangle \\
& \geq -f^* + f(\z^t) + f(\z^{t+1}) - f(\z^t) - \frac{L}{2}\|\z^{t+1} - \z^t\|^2 \\
& = f(\z^{t+1}) - f^* - \frac{L}{2}\|\z^{t+1} - \z^t\|^2
\end{align*}
where $L$ is the Lipschitz continuous gradient constant. Back to the original inequality, we have
\begin{align*}
\|\z^{t+1} - P_{\z^*}(\z^{t+1})\|^2 
& \leq \Delta_t - \|\z^{t+1} - \z^t\|^2 - 2\gamma\langle \nabla f(\z^t), \z^{t+1} - P_{\z^*}(\z^{t})\rangle \\
& \leq \Delta_t - (1-L\gamma) \|\z^{t+1} - \z^t\|^2 - 2\gamma(f(\z^{t+1}) - f^*) \\
& \leq \Delta_t - (1-L\gamma) \|\z^{t+1} - \z^t\|^2 - 2\gamma\frac{\lambda}{2} \|\z^{t+1} - P_{\z^*}(\z^{t+1})\|^2
\end{align*}
where the last inequality comes from Lemma~\ref{lem:osc}.

Let $\gamma = \frac{1}{L}$ and we have
\begin{align*}
(1 + \frac{\lambda}{L})\Delta_{t+1} & \leq \Delta_t \\
\Delta_{t+1} & \leq \frac{1}{1+\frac{\lambda}{L}}\Delta_t
\end{align*}
which shows the linear convergence rate $\alpha=\frac{1}{1+\frac{\lambda}{L}}$, then it completes the proof.
\end{proof}

\section{Formulation of the Expected Accuracy in Crowdsourcing Task Assignment}
In crowdsourcing task assignment problem, recall the objective function of problem~\eqref{eq:crowd}:
\begin{equation*}
{1\over m}\sum_{j=1}^{m}\mathcal{E}_{\text{acc}}(Q_{\cdot,j}, X_{\cdot,j})
\end{equation*}
%
For the $j$-th task, $\mathcal{E}_{\text{acc}}(Q_{\cdot,j}, X_{\cdot,j})$ is defined in the following:
\begin{align}
\mathcal{E}_{\text{acc}}(Q_{\cdot,j}, X_{\cdot,j})
=& \P(\hat{\y}_j=1 | \y_j=1 )\P(\y_j=1) + \P(\hat{\y}_j=0 | \y_j=0 )\P(\y_j=0) \notag\\
=&\E_{\hat{Y}_{\Omega_j,j}|\y_j=1}[{\bf I}(\hat{\y}_j=1)]\P(\y_j=1) +\E_{\hat{Y}_{\Omega_j,j}|\y_j=0}[{\bf I}(\hat{\y}_j=0)]\P(\y_j=0) \label{eq:expacc}
\end{align}
where ${\bf I}(\cdot)$ is the indicator function. We can further {\rc specify} this formulation by considering the equivalent forms for $\hat{\y}_j=1$ and $\hat{\y}_j =0$:
\begin{align*}
&\hat{\y}_j=1 \\
\Leftrightarrow & \P(\y_j=1 | \hat{Y}_{\Omega_j,j}) \geq \P(\y_j=0 | \hat{Y}_{\Omega_j,j}) \\
\Leftrightarrow & {\P(\hat{Y}_{\Omega_j,j} | \y_j=1) \over \P(\hat{Y}_{\Omega_j,j} | \y_j=0)} \geq {\P(\y_j=0) \over \P(\y_j=1)}\\
\Leftrightarrow & \prod_{i\in \Omega_j} \left({Q_{ij} \over 1-Q_{ij}}\right)^{2\hat{Y}_{i,j}-1} \geq {\P(\y_j=0) \over \P(\y_j=1)}\\
\Leftrightarrow & \sum_{i\in \Omega_j} (2\hat{Y}_{i,j}-1) \log\left({Q_{ij} \over 1-Q_{ij}}\right) \geq \log\left({\P(\y_j=0) \over \P(\y_j=1)}\right)\\
\Leftrightarrow & \sum_{i = 1}^n X_{ij}(2\hat{Y}_{i,j}-1) \log\left({Q_{ij} \over 1-Q_{ij}}\right) - \log\left({\P(\y_j=0) \over \P(\y_j=1)}\right) \geq 0
\end{align*}
Similar derivation can be applied to $\hat{\y}_j=0$ (change ``$\geq$'' to ``$<$''). Here we substitute the indicator function ${\bf I}(t\geq0)$ as sigmoid function $S(t)={1\over 1+\exp(-t)}$ {\rc to obtain a smooth approximation. Denote by $Z_{ij}:=(2\hat{Y}_{i,j}-1) \log\left({Q_{ij} / (1-Q_{ij}})\right)$ and $\r_j:=\log\left({\P(\y_j=0) / \P(\y_j=1)}\right)$ for short. The (smooth) objective turns out to be}:
\begin{equation*}
 {1\over m}\sum_{j=1}^{m} \Bigg(\E_{\hat{Y}|\y_j=1} \left [S\left(\sum_{i = 1}^n Z_{ij}X_{ij}-\r_j\right)\right]\P(\y_j=1) +\E_{\hat{Y}|\y_j=0}  \left [S\left(\r_j - \sum_{i = 1}^n Z_{ij}X_{ij}\right)\right]\P(\y_j=0)\Bigg)
\end{equation*}
and its stochastic gradient is:
\begin{align*}
\g(X)_{\cdot, j} = &{1\over m} \P(\y_j = 1) (1-S(\sum_{i = 1}^n Z_{ij}^{\y_j = 1} X_{ij}-\r_j))S(\sum_{i = 1}^n Z_{ij}^{\y_j = 1}X_{ij}-\r_j) Z_{\cdot,j}^{\y_j = 1} +\\
& {1\over m}\P(\y_j = 0) (1-S(\sum_{i = 1}^n Z_{ij}^{\y_j = 0} X_{ij}-\r_j))S(\sum_{i = 1}^n Z_{ij}^{\y_j = 0}X_{ij}-\r_j) Z_{\cdot,j}^{\y_j = 0}
\end{align*}
where $Z^{\y_j = 1}$ (or $Z^{\y_j = 0}$) is generated by sampling $\hat{Y}$ given $\y_j=1$ (or $\y_j=0$).
\end{document}